\newtheorem{prop}{Proposition}
\begin{document}

\setlength{\belowdisplayskip}{5pt} \setlength{\belowdisplayshortskip}{5pt}
\setlength{\abovedisplayskip}{5pt} \setlength{\abovedisplayshortskip}{5pt}

\setlength{\parskip}{3pt plus0pt minus0pt}

\twocolumn[

\title{Collective evolution of weights in wide neural networks}
\author{Dmitry Yarotsky\\
Skolkovo Institute of Science and Technology,  Moscow \\
\texttt{d.yarotsky@skoltech.ru}}
\date{}

\maketitle

]

\begin{abstract}
We derive a nonlinear integro-differential transport equation describing  collective evolution of weights under gradient descent in large-width neural-network-like models. 
We characterize stationary points of the evolution and analyze several scenarios where the transport equation can be solved approximately. We test our general method  in the special case of linear free-knot splines, and find good agreement between theory and experiment in observations of global optima, stability of stationary points, and convergence rates. 
\end{abstract}

\section{Introduction}
Modern neural networks include millions of neurons, and one can expect that macroscopic properties of such big systems can be described by analytic models -- similarly to how statistical mechanics or fluid dynamics describe macroscopic properties of big colections of physical particles. One specific approach in this general direction is to consider the limit of ``large-width'' neural networks, typically combined with the assumption of weight independence. There is a well-known connection between neural networks, Gaussian processes and kernel machines in this limit (\cite{neal2012bayesian, williams1997computing}). 
Recently, this limit has been used, along with some complex mathematical tools, to understand the landscape of the loss surface of large networks. In particular, \cite{choromanska2015loss} establish a link to the theory of spin glasses and use this theory to explain the distribition of critical points on the loss surface; \cite{pennington2017geometry} analyze the loss surface using random matrix theory; \cite{poole2016exponential, schoenholz2016deep} analyze propagation of information in a deep network assuming Gaussian signal distributions in wide layers. 

In the present paper we apply the large-width limit to describe the collective evolution of weights under standard gradient descent used to train the network. Weight optimization is currently a topic of active theoretical research. While local convergence of gradient descent is generally well-understood (\cite{nesterov2013introductory}), its global and statistical properties are hard to analyze due to the complex nonconvex shape of the loss surface. There are some special scenarios where the absence of spurious local minima has been proved, in particular in deep linear networks (\cite{kawaguchi2016deep, laurent2017deep}) or for pyramidal networks trained on small training set (\cite{nguyen2017loss}). But in general, the loss surface is known to have many local minima or saddle points trapping or delaying optimization, see e.g. (\cite{safran2017spurious, 
dauphin2014identifying}). Similarly, gradient descent is known to be analytically tractable in some cases, e.g. in linear networks (\cite{saxe2013exact, bartlett2018gradient}) or in ReLU networks under certain assumptions on the distribution of the inputs (\cite{tian2017analytical}), but in general,  current studies of gradient descent in large networks tend to postulate its replacement by an ``effective'' macroscopic model  (\cite{shwartz2017opening, chaudhari2018deep}). 

The goal of the present paper is to derive a general macroscopic description of the gradient descent directly from the underlying finite model, without ad hoc assumptions. We do this in section \ref{sec:transp}, obtaining a nonlinear integro-differential transport equation. In sections \ref{sec:properties},\ref{sec:mfloss} we analyze its general properties and characterize its global minima and stationary points. Interestingly, an important quantity for the transport equation is the quadratic mean-field version of the loss function: this loss does not increase under the evolution. However, the generator of the transport equation is not the formal gradient of this loss -- in particular, that is why the macroscopic dynamics does have stationary points that are not global minima. Next, in sections \ref{sec:linglobmin}--\ref{sec:localgauss} we analyze three scenarios where the transport equation can be solved perturbatively (near a global minimizer, for small output weights, and for localized Gaussian distributions). Finally, in section \ref{sec:splines} we verify our general method on a simple one-dimensional ReLU network (essentially describing linear splines). We compare the theoretical predictions involving stationary points, their stability, and convergence rate of the dynamics with the experiment and find a good agreement between the two.

This short paper is written at a ``physics level of rigor''. We strive to expose the main ideas and do not attempt to fill in all mathematical details. 

\section{General theory}
\subsection{A ``generalized shallow network'' model}
We consider the problem of approximating a ``ground truth'' map $f:X \to \mathbb R$. The nature of the set $X$ will not be important for the present  exposition.  We consider approximation by the ``generalized shallow network'' model:
\begin{equation}\label{eq:approx}\widehat f(\mathbf W, \mathbf x)=\frac{1}{N}\sum_{n=1}^N\phi(\mathbf w_n, \mathbf x), \quad \mathbf w_n\in\mathbb R^d, \mathbf x\in X,\end{equation}
where $\phi$ is some function of the weights and the input, $\mathbf w_n$ is the weight vector for the $n$'th term, and $\mathbf W = (\mathbf w_1,\ldots, \mathbf w_N)\in \mathbb R^{Nd}$ is the full weight vector. The usual neural network with a single hidden layer is obtained when $X\subset \mathbb R^\nu, d=\nu+2, \mathbf w=(c,w_1,\ldots,w_\nu,h)$ and $\phi(\mathbf w, \mathbf x)=c\sigma(\sum_{n=1}^\nu w_nx_n+h)$ with some nonlinear activation function $\sigma$.\footnote{The standard network model also includes a constant term in the output layer, but we will omit it for simplicity of the exposition.}  

We consider the usual quadratic loss function
\begin{equation}\label{eq:loss}L(\mathbf W)=\frac{1}{2}\int_X (\widehat f(\mathbf W, \mathbf x)-f(\mathbf x))^2 d\mu(\mathbf x),\end{equation}
where $\mu$ is some measure on $X$. Again, we don't assume anything specific about $\mu$: for example, it can consist of finitely many atoms (a ``finite training set'' scenario) or have a density w.r.t. the Lebesgue measure (assuming $X\subset\mathbb R^\nu$; a ``population average'' scenario).

We will study the standard gradient descent dynamics of the weights: 
\begin{equation}\label{eq:dtw}\frac{d}{dt} \mathbf W = \mathbf G,\quad \mathbf G=-\alpha\nabla_{\mathbf W} L,\end{equation}
where $\alpha$ is the learning rate and $\nabla_{\mathbf W}$ is the gradient w.r.t. $\mathbf W$. 

\subsection{Derivation of the transport equation } \label{sec:transp} We are interested in the collective behavior of the weights under gradient descent in the ``wide network'' limit $N\to\infty$. It is convenient to view the weight vector $\mathbf W$ as a stochastic object described by a distribution with a  density function $P(\mathbf W, t)\ge 0$ such that $\int_{\mathbb R^{Nd}}P(\mathbf W, t)d\mathbf W=1$. The common practice in network training (see e.g. \cite{glorot2010understanding}) is to initialize the weights $\mathbf w_n$ randomly and independently with some density $p(\cdot,0)$, i.e. $P(\mathbf W, 0)=\prod_{n=1}^Np(\mathbf w_n,0).$ Our immediate goal will be to derive, under suitable assumptions, the equation governing the evolution of the factors.

First we replace the system \eqref{eq:dtw} of ordinary differential equations  in $\mathbf W$  by a partial differential equation in $P$. We can view the vector field $\mathbf G=\mathbf G(\mathbf W)$ in Eq.\eqref{eq:dtw} as a ``probability flux''. Then, the evolution of $P$ can be described by the continuity equation expressing local probability conservation: \begin{equation}\label{eq:conteq}\frac{1}{P}\frac{d}{dt}P=-\nabla_{\mathbf W}\cdot\mathbf G=-\nabla_{\mathbf W}\cdot(-\alpha\nabla_{\mathbf W} L)=\alpha\Delta_{\mathbf W} L,\end{equation}
where $\Delta_{\mathbf W}$ is the Laplacian and $\frac{d}{dt}P$ denotes the material derivative: $$\frac{d}{dt}P=\frac{\partial}{\partial t}P+\frac{d}{dt} \mathbf W\cdot\nabla_{\mathbf W}P$$
(see Appendix \ref{sec:conteq} for details).

Expanding the continuity equation, we get
\begin{align}\frac{1}{P}\frac{dP}{dt}(\mathbf W)=&\alpha\Delta L(\mathbf W)=\alpha\sum_{n=1}^N\Delta_{\mathbf w_n} L(\mathbf W)\nonumber\\
=&\frac{\alpha}{N}\int_X\Big[ (\widehat f(\mathbf W, \mathbf x)-f(\mathbf x))\sum_{n=1}^N\Delta_{\mathbf w} \phi({\mathbf w}_n, \mathbf x)\nonumber\\ \label{eq:1pdt}
&+\frac{1}{N}\sum_{n=1}^N|\nabla_{\mathbf w} \phi({\mathbf w}_n, \mathbf x)|^2\Big] d\mu(\mathbf x)
\end{align}
We look for factorized solutions $P(\mathbf W,t)=\prod_{n=1}^N p(\mathbf w_n,t)$
with some density $p$. However, it is clear that we cannot fulfill this factorization exactly because of the interaction of different weights $\mathbf w_n$ on the r.h.s. To decouple them, we perform the ``mean field'' (or ``law of large numbers'') approximation:
\begin{align}\widehat f(\mathbf W, \mathbf x)={}&\frac{1}{N}\sum_{n=1}^N\phi(\mathbf w_n, \mathbf x)\label{eq:mf}\\\approx{}& \mathbb E_{\mathbf w\sim p}\phi(\mathbf w,\mathbf x)=\int p(\mathbf w,t)\phi(\mathbf w,\mathbf x)d\mathbf w.\nonumber\end{align}
This approximation corresponds to the limit $N\to\infty$. To obtain a finite non-vanishing r.h.s. in Eq.\eqref{eq:1pdt}, we rescale the learning rate $\alpha$ by setting $\alpha=N$. We discard the second term $\frac{1}{N}\sum|\nabla_{\mathbf w}\phi|^2$ on the r.h.s. of \eqref{eq:1pdt}, since the coefficient $\frac{1}{N}$ makes it asymptotically small compared to the first term. After all this, we obtain the equation  
\begin{align*}\frac{1}{P}\frac{dP}{dt}(\mathbf W)={}&\int_X\Big(\int  p(\mathbf w',t)\phi(\mathbf w',\mathbf x)d\mathbf w'-f(\mathbf x)\Big)\\
&\times\sum_{n=1}^N\Delta_{\mathbf w} \phi({\mathbf w_n}, \mathbf x) d\mu(\mathbf x),\end{align*}
admitting a factorized solution. Namely, using the product form of $P$, unwrapping the material derivative and equating same-$n$ terms, we get
\begin{align}
\frac{1}{p(\mathbf w_n,t)}&\Big(\frac{\partial}{\partial t}p(\mathbf w_n,t)+\frac{d}{dt}\mathbf w_n\cdot\nabla_\mathbf{w}p(\mathbf w_n,t)\Big)\nonumber\\
={}&\int_X\Big(\int  p(\mathbf w',t)\phi(\mathbf w',\mathbf x)d\mathbf w'-f(\mathbf x)\Big)\nonumber\\
&\times\Delta_{\mathbf w} \phi({\mathbf w_n}, \mathbf x) d\mu(\mathbf x).\label{eq:1pwn}
\end{align}
Using the mean-field approximation \eqref{eq:mf}, we can write
\begin{align*}\frac{d}{dt} \mathbf w_n ={}& -N\nabla_{{\mathbf w}_n} L\\
={}&-\int_X (\widehat f(\mathbf W, \mathbf x)-f(\mathbf x))\nabla_{\mathbf w} \phi({\mathbf w}_n, \mathbf x) d\mu(\mathbf x).\end{align*}
Making this replacement for $\frac{d}{dt} \mathbf w_n$ in Eq.\eqref{eq:1pwn}, dropping the index $n$ and rearranging the terms, we obtain the desired \emph{transport equation} for $p$: 
\begin{align}\frac{\partial}{\partial t} p(\mathbf w, t)
= {}& \int_X\Big(\int  p(\mathbf w',t)\phi(\mathbf w',\mathbf x)d\mathbf w'-f(\mathbf x)\Big) \nonumber\\
&\times \nabla_{\mathbf w}\cdot( p(\mathbf w,t)\nabla_{\mathbf w} \phi({\mathbf w}, \mathbf x))d\mu(\mathbf x)\label{eq:transp}.\end{align}

\subsection{Properties of the transport equation}\label{sec:properties}
The derived equation \eqref{eq:transp} is a nonlinear integro-differential equation in $p(\mathbf w,t),$
and we expect it to approximately describe the evolution of the distribution of the weights $\mathbf w_n$. In agreement with this interpretation, the equation \eqref{eq:transp} preserves the total probability ($\frac{d}{dt}\int p(\mathbf w,t)d\mathbf w=0$) -- this follows from the gradient form of the r.h.s. of Eq.\eqref{eq:transp} (assuming the function $p(\mathbf w,t)$ falls off sufficiently fast as $|\mathbf w|\to\infty$, so that the boundary term can be omitted when integrating by parts). We also expect that, under suitable regularity assumptions, the equation preserves the nonnegativity $p(\mathbf w,0)\ge 0$ of the initial condition. In the sequel, our treatment of this equation will be rather heuristic; in particular, we will not distinguish between regular and weak (distributional) solutions $p$.      

It is helpful to consider Eq.\eqref{eq:transp} as a pair of an integral and a differential equations:
\begin{align}u(\mathbf w,t)={}&\int_X\Big(\int p(\mathbf w',t)\phi(\mathbf w',\mathbf x)d\mathbf w'-f(\mathbf x)\Big)\nonumber\\
&\times\phi({\mathbf w}, \mathbf x)d\mu(\mathbf x),\label{eq:transp21}\\
\frac{\partial}{\partial t} p(\mathbf w, t)={}&\nabla_{\mathbf w}\cdot( p(\mathbf w,t)\nabla_{\mathbf w} u(\mathbf w,t)).\label{eq:transp22}\end{align}
The quantity $u(\mathbf w,t)$ reflects the correlation of the current approximation error, as a function of $\mathbf x$, with the function $\phi(\mathbf w,\cdot)$. If $u(\mathbf w,t)$ is known for all $\mathbf w,t$, then equation \eqref{eq:transp22}, being  first-order in $p$, can be solved by the method of characteristics. Specifically, let $\mathbf w(t)$ be some trajectory satisfying the equation $\frac{d}{dt}\mathbf w=-\nabla_{\mathbf w} u(\mathbf w,t)$. On this trajectory, by Eq.\eqref{eq:transp22}, 
\[\frac{d}{dt}p(\mathbf w(t),t)=p(\mathbf w(t),t)\Delta_\mathbf{w}u(\mathbf w(t),t),\]
which has the solution
\[p(\mathbf w(t),t)=p(\mathbf w(t_0),t_0)e^{\int_{t_0}^t \Delta_\mathbf{w}u(\mathbf w(\tau),\tau)d\tau}\]
This shows that, geometrically, $-\nabla_{\mathbf w}u(\mathbf w,t)$ determines the transport direction in the $\mathbf w$ space, while $\Delta_{\mathbf w}u(\mathbf w,t)$ determines the infinitesimal change of the value of $p$.

\subsection{The mean-field loss and stationary points}\label{sec:mfloss}
We define the mean-feald loss function on distributions $p(\mathbf w)$ by plugging the approximation \eqref{eq:mf} into the original loss formula \eqref{eq:loss}:  
\begin{equation}\label{eq:lmfdef}L_{\mathrm{mf}}(p)=\frac{1}{2}\int_X \Big(\int p(\mathbf w)\phi(\mathbf w,\mathbf x)d\mathbf w-f(\mathbf x)\Big)^2 d\mu(\mathbf x).\end{equation}
If $p(\mathbf w,t)$ is a solution of the transport equation \eqref{eq:transp}, then, by a computation involving integration by parts,
\begin{equation}\label{eq:dtlossmf}
\frac{d}{dt}L_{\mathrm{mf}}(p(\cdot,t))=-\int p(\mathbf w,t)|\nabla_{\mathbf w}u(\mathbf w,t)|^2d\mathbf w,
\end{equation}
where $u(\mathbf w,t)$ is defined by Eq.\eqref{eq:transp21} (see Appendix \ref{sec:derivloss}). 
In particular, the mean-field loss does not decrease for any nonnegative solution $p$:  \begin{equation}\label{eq:monot}\frac{d}{dt}L_{\mathrm{mf}}(p(\cdot,t))\le 0.
\end{equation} This property is inherited from the original gradient descent \eqref{eq:dtw}. Note, however, that the transport equation \eqref{eq:transp} is \emph{not} a (formal) gradient descent performed on $L_{\mathrm{mf}}$ w.r.t. $p$ -- this latter would be given by $\frac{\partial}{\partial t} p(\mathbf w, t)=[-\nabla_{p}L_{\mathrm {mf}}(p)](\mathbf w, t)=-u(\mathbf w, t)$ and is quite different from Eq.\eqref{eq:transp}. (In fact, in contrast to Eq.\eqref{eq:transp}, the equation $\frac{\partial}{\partial t} p(\mathbf w, t)=-u(\mathbf w, t)$ does not seem to generally admit reasonable solutions --  even in simplest cases solutions may blow up in infinitesimal time). 

Accordingly, the intuition behind the connection of the transport equation \eqref{eq:transp} to the mean-field loss is different from the na\"ive intuition of gradient descent. Note that $L_{\mathrm{mf}}$ is a convex quadratic functional in $p,$ and the set of all nonnegative distributions $p$ is convex. The intuition of finite-dimensional gradient descent then suggests that the solution should invariably converge to a global minimum of $L_{\mathrm{mf}}$. However, this is not the case with the dynamics \eqref{eq:transp} which can easily get trapped at stationary points. The meaning of the stationary point is clarified by the following proposition.

\begin{prop}\label{prop:equiv} Assuming $p(\mathbf w, t)\ge 0$ is a solution of Eq.\eqref{eq:transp}, the following conditions are equivalent at any particular $t=t_0$:
\begin{enumerate}
\item $\frac{d}{dt}L_{\mathrm{mf}}(p(\cdot,t_0)) = 0$;
\item $\nabla_{\mathbf w} u(\mathbf w,t_0)=0$ at all $\mathbf w$ where $p(\mathbf w,t_0)>0$. 
\item $\frac{\partial p}{\partial t}(\cdot, t_0)\equiv 0$.
\end{enumerate} 
\end{prop}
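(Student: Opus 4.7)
The plan is to prove the cycle $(3)\Rightarrow(1)\Rightarrow(2)\Rightarrow(3)$, organized around two complementary expressions for $\tfrac{d}{dt}L_{\mathrm{mf}}(p(\cdot,t))$. The first is the dissipation identity \eqref{eq:dtlossmf}. The second is a chain-rule formula
\[
\frac{d}{dt}L_{\mathrm{mf}}(p(\cdot,t)) = \int u(\mathbf{w},t)\,\frac{\partial p}{\partial t}(\mathbf{w},t)\,d\mathbf{w},
\]
obtained by differentiating the definition \eqref{eq:lmfdef} directly under the $\mathbf{x}$-integral and then applying Fubini: the bracketed factor of $\partial_t p$ matches the definition \eqref{eq:transp21} of $u$. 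Thus $u$ plays the role of the variational derivative of $L_{\mathrm{mf}}$ with respect to $p$, and indeed \eqref{eq:dtlossmf} follows from this formula by substituting \eqref{eq:transp22} and integrating by parts once.

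The three implications are then all short. For $(3)\Rightarrow(1)$, I substitute $\frac{\partial p}{\partial t}(\cdot,t_0)\equiv 0$ into the chain-rule formula above. For $(1)\Rightarrow(2)$, by \eqref{eq:dtlossmf} the assumption is $\int p(\mathbf{w},t_0)|\nabla_{\mathbf{w}}u(\mathbf{w},t_0)|^2\,d\mathbf{w}=0$; since $p\ge 0$ and the integrand is pointwise nonnegative, it must vanish almost everywhere, and with $\nabla_{\mathbf{w}}u(\cdot,t_0)$ continuous in $\mathbf{w}$ (as it is in any reasonable setting for $\phi$) this is exactly condition (2). For $(2)\Rightarrow(3)$, I plug (2) into \eqref{eq:transp22}: the flux $p\nabla_{\mathbf{w}}u$ vanishes as a pointwise function (it is zero on $\{p>0\}$ by (2) and trivially zero on $\{p=0\}$), so its divergence is zero, which is condition (3).

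The main delicacy lies in $(2)\Rightarrow(3)$: condition (2) controls $\nabla_{\mathbf{w}}u$ only on the open set $\{p>0\}$, yet I need the product $p\nabla_{\mathbf{w}}u$ to vanish globally so that its divergence is defined and equals zero. This works under the implicit regularity of the paper (e.g.\ $\nabla_{\mathbf{w}}u$ locally bounded and $p$ locally integrable), but a fully rigorous treatment would require specifying function spaces for $p,\phi,f$ and interpreting \eqref{eq:transp22} in a weak sense, especially across the boundary $\partial\{p>0\}$. The other two implications carry no analogous subtlety, and the whole argument comfortably fits the declared physics-level rigor.
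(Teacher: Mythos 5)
Your proposal is correct and follows essentially the same route as the paper: both rest on the dissipation identity \eqref{eq:dtlossmf} for the link between (1) and (2), and on the observation that (2) forces the flux $p\nabla_{\mathbf w}u$ to vanish identically so that \eqref{eq:transp22} gives (3). The only cosmetic differences are that you organize the argument as a single cycle $(3)\Rightarrow(1)\Rightarrow(2)\Rightarrow(3)$ and spell out the chain-rule formula behind the step the paper dismisses as trivial; your remark on the regularity needed across $\partial\{p>0\}$ is a fair caveat that the paper leaves implicit.
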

\begin{proof}
The equivalence 1) $\Leftrightarrow$ 2) follows from Eq.\eqref{eq:dtlossmf}. The implication 3)  $\Rightarrow$ 1) is trivial. It remains to establish 2)  $\Rightarrow$ 3). Since $p\ge 0$, condition 2) implies that $p(\mathbf w,t_0) \nabla_{\mathbf w} u(\mathbf w,t_0)\equiv 0$ for all $\mathbf w$. Then, by Eq.\eqref{eq:transp22},
$\frac{\partial}{\partial t} p(\mathbf w, t_0)=  \nabla_{\mathbf w}( p(\mathbf w,t_0) \nabla_{\mathbf w} u(\mathbf w,t_0))=0$ for all $\mathbf w$.
\end{proof}
Thus, a stationary distribution $p(\mathbf w)$ can be characterized by any of the three conditions of this proposition. Given the monotonicity \eqref{eq:monot}, the proposition suggests that, under reasonable regularity assumptions, any solution of the transport equation \eqref{eq:transp} eventually converges to such a stationary distribution.

If the family of maps $\{\phi(\mathbf w, \cdot)\}_{\mathbf w\in\mathbb R^d}$ is sufficiently rich, then  we expect a stationary distribution $p$ either to be a global minimizer of $L_{\mathrm {mf}}$ or to be  concentrated on some proper subset of the $\mathbf w$-space $\mathbb R^d$. We state one particular proposition demonstrating this in the important case of \emph{models with linear output}. We say that an approximation \eqref{eq:approx} is a model with linear output if $\phi(\mathbf w,\mathbf x)=w_0\widetilde{\phi}(\widetilde{\mathbf w},\mathbf x)$, where $\mathbf w=(w_0, \widetilde{\mathbf w})$ and $\widetilde{\phi}$ is some map. For example, the standard shallow neural network is a model with linear output. 
For the proposition below, we assume that the functions $\widetilde{\phi}(\widetilde{\mathbf w}, \cdot)$ and the error function $\mathbf x\mapsto \int p(\mathbf w)\phi(\mathbf w,\mathbf x)d\mathbf w-f(\mathbf x)$ are square-integrable w.r.t. the measure $\mu$.  
Recall that a set of vectors in a Hilbert space is called \emph{total} if their finite linear combinations are dense in this space.  
\begin{prop}  Let $p(\mathbf w)$ be a stationary distribution supported on a subset $A\subset \mathbb R^d$. Suppose that the functions $\{\widetilde{\phi}(\widetilde{\mathbf w},\cdot)\}_{{\mathbf w}\in A }$ are total in $L^2(\mu)$. Then $L_{\mathrm{mf}}(p)=0.$  
\end{prop}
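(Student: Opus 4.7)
My plan is to use Proposition~\ref{prop:equiv} to convert stationarity into a pointwise condition on $\nabla_{\mathbf w} u$ and then exploit the linear-output structure to reduce it to an orthogonality condition on the error that totality kills.

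First, by Proposition~\ref{prop:equiv} (condition 2), a stationary distribution $p$ satisfies $\nabla_{\mathbf w} u(\mathbf w) = 0$ at every $\mathbf w \in A$. Next, I would substitute the linear-output ansatz $\phi(\mathbf w,\mathbf x) = w_0\widetilde\phi(\widetilde{\mathbf w},\mathbf x)$ into the definition \eqref{eq:transp21} of $u$. Writing the current approximation error as $e(\mathbf x) = \int p(\mathbf w')\phi(\mathbf w',\mathbf x)\,d\mathbf w' - f(\mathbf x)$ and defining
\[ v(\widetilde{\mathbf w}) = \int_X e(\mathbf x)\,\widetilde\phi(\widetilde{\mathbf w},\mathbf x)\,d\mu(\mathbf x) = \langle e,\, \widetilde\phi(\widetilde{\mathbf w},\cdot)\rangle_{L^2(\mu)}, \]
I obtain the clean factorization $u(\mathbf w) = w_0\, v(\widetilde{\mathbf w})$.

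Differentiating in $w_0$ gives the identity $\partial u/\partial w_0 = v(\widetilde{\mathbf w})$. Since the full gradient $\nabla_{\mathbf w} u$ vanishes on $A$, so does its $w_0$-component, giving $v(\widetilde{\mathbf w}) = 0$ for every $\mathbf w \in A$. Equivalently, $e$ is $L^2(\mu)$-orthogonal to each element of the family $\{\widetilde\phi(\widetilde{\mathbf w},\cdot)\}_{\mathbf w\in A}$. By the totality hypothesis, this family has dense linear span in $L^2(\mu)$, so its orthogonal complement is trivial and $e \equiv 0$ in $L^2(\mu)$. Plugging $e=0$ into the definition \eqref{eq:lmfdef} of the mean-field loss yields $L_{\mathrm{mf}}(p) = \tfrac{1}{2}\|e\|_{L^2(\mu)}^2 = 0$.

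There is essentially no obstacle at the stated level of rigor; the only noteworthy point is that the argument uses only the $w_0$-component of the stationarity condition. The remaining components $w_0\nabla_{\widetilde{\mathbf w}} v(\widetilde{\mathbf w}) = 0$ on $A$ are automatically consistent once $v$ vanishes on the $\widetilde{\mathbf w}$-projection of $A$, so they do no work; in particular, points of $A$ with $w_0 = 0$ still contribute useful orthogonality constraints through the $w_0$-derivative. A fully rigorous version would interpret ``supported on $A$'' and the vanishing of $\nabla_{\mathbf w} u$ on $A$ modulo $p$-null sets, which does not affect the conclusion since totality is assumed for the index set $A$ itself.
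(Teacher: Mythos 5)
Your proof is correct and follows essentially the same route as the paper's: invoke condition 2 of Proposition~\ref{prop:equiv}, take the $w_0$-component of $\nabla_{\mathbf w}u$ to extract the orthogonality of the error to each $\widetilde\phi(\widetilde{\mathbf w},\cdot)$ with $\mathbf w\in A$, and conclude by totality. Your explicit factorization $u(\mathbf w)=w_0\,v(\widetilde{\mathbf w})$ and the remark about the support-versus-positivity-set distinction are slightly more careful than the paper's one-line version, but the argument is identical in substance.
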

\begin{proof}
By the second stationarity condition of proposition \ref{prop:equiv}, we have $\nabla_{\mathbf w}u(\mathbf w)=0$ for all $\mathbf w\in A$. In particular, taking the partial derivative w.r.t. $w_0$, \[\int_X\Big(\int p(\mathbf w',t)\phi(\mathbf w',\mathbf x)d\mathbf w'-f(\mathbf x)\Big)\widetilde{\phi}(\widetilde{\mathbf w}, \mathbf x)d\mu(\mathbf x)=0\]
for all  $\mathbf w\in A$. Since the functions $\{\widetilde{\phi}(\widetilde{\mathbf w},\cdot)\}_{{\mathbf w}\in A }$ are total, $\int p(\mathbf w',t)\phi(\mathbf w',\mathbf x)d\mathbf w'=f(\mathbf x)$ almost everywhere w.r.t. $\mu$, and hence $L_{\mathrm{mf}}(p)=0.$  
\end{proof}

\subsection{Linearization near a global minimizer}\label{sec:linglobmin}
Suppose that the solution $p(\mathbf w,t)$ of the transport equation converges to a limiting distribution $p_\infty(\mathbf w)$ such that $L_{\mathrm{mf}}(p_\infty)=0,$ i.e. $\int p_\infty(\mathbf w',t)\phi(\mathbf w',\mathbf x)d\mathbf w'=f(\mathbf x)$ ($\mu$-a.e.). Consider the positive semidefinite kernel
\[K(\mathbf w,\mathbf w')=\int_X \phi(\mathbf w,\mathbf x)\phi(\mathbf w',\mathbf x)d\mu(\mathbf x)\]
and the corresponding integral operator
\[\mathcal K p(\mathbf w)=\int K(\mathbf w,\mathbf w') p(\mathbf w')d\mathbf w'.\]
Note that the operator $\mathcal K$ determines the quadratic part of the mean-field loss:    
\begin{equation}\label{eq:kmfl} L_{\mathrm{mf}}(p_\infty+\delta p)=\frac{1}{2}\langle \mathcal K\delta p, \delta p\rangle,\end{equation}
where $\langle f,g\rangle=\int f(\mathbf w)g(\mathbf w)d\mathbf w.$

Let us represent the solution $p$ in the form $p(\mathbf w,t)=p_\infty(\mathbf w)+\delta p(\mathbf w,t)$ with a small $\delta p$. Plugging this into Eq.\eqref{eq:transp} and keeping only terms linear in $\delta p$, we obtain
\begin{align}\label{eq:r}\frac{\partial}{\partial t} \delta p(\mathbf w, t)\approx {}& \int_X\Big(\int  \delta p(\mathbf w',t)\phi(\mathbf w',\mathbf x)d\mathbf w')\Big) \nonumber\\
&\times \nabla_{\mathbf w}\cdot\big(p_\infty(\mathbf w)\nabla_{\mathbf w} \phi({\mathbf w}, \mathbf x)\big)d\mu(\mathbf x) \nonumber\\
= {}& \nabla\cdot\big(p_\infty\nabla) \mathcal K \delta p(\mathbf w, t)\nonumber\\
={}& \mathcal R \delta p(\mathbf w, t),
\end{align}
where $\mathcal R$ is the integral operator with the kernel
\begin{equation}\label{eq:rk}R(\mathbf w,\mathbf w')=\nabla_{\mathbf w}\cdot(p_\infty(\mathbf w)\nabla_{\mathbf w}) K(\mathbf w,\mathbf w').\end{equation}
Under this linearization, the solution to the transport equation can be written as 
\begin{equation}\label{eq:etr}
\delta p(\cdot, t) \approx e^{t\mathcal R}\delta p(\cdot, 0).
\end{equation}
The operator $\mathcal R$ is symmetric and negative semi-definite w.r.t. the semi-definite scalar product associated with the kernel $K$:
\[\langle \mathcal K p,\mathcal R q\rangle = \langle \mathcal K \mathcal R p, q\rangle = -\langle p_\infty \nabla \mathcal K p, \nabla \mathcal K q\rangle. \]
This suggests that we can use the spectral theory of self-adjoint operators to analize the large--$t$ evolution of $p(\cdot,t)$. A caveat here is that the scalar product $\langle \mathcal K \cdot, \cdot\rangle$ is not strictly positive definite, in general. This issue can be addressed by considering  the quotient space $\mathcal H_{K}'=\mathcal H_{K}/\mathcal N$, where $\mathcal H_{K}$ is the Hilbert space  associated with the semi-definite scalar product $\langle \mathcal K \cdot, \cdot\rangle$, and $\mathcal N=\{p:\mathcal K p=0\}=\{p:\langle \mathcal K p, p\rangle=0\}$. On this quotient space the scalar product $\langle \mathcal K \cdot, \cdot\rangle$ is non-degenerate, and $\mathcal R$ extends from $\mathcal H_{K}$ to $\mathcal H_{K}'$ since the subspace $\mathcal N$ lies in the null-space of $\mathcal R$.

Now, the self-adjoint operator $\mathcal R$ has an orthogonal spectral decomposition in $\mathcal H_K'$.   In particular, suppose that $\mathcal R$ has a discrete spectrum of eigenvalues $\lambda_k$ and $\langle\mathcal K\cdot,\cdot\rangle$--orthogonal eigenvectors $[p_k]\in\mathcal H_{K}'$. For each $k$, the equivalence class $[p_k]$ has a unique representative $p_k\in \mathcal H_{K}$ such that $\mathcal R p_k=\lambda_k p_k$ in $\mathcal H_{K}$. On the other hand, $\mathcal R\mathcal N=0$. We can view the space $\mathcal H_{K}$ as a direct sum $\mathcal H_{K}'\oplus \mathcal N,$ so this gives us the full eigendecomposition of the space $\mathcal H_{K}$. Recall that we have assumed that $p(\cdot,t)=p_\infty+\delta p(\cdot,t)\stackrel{t\to+\infty}{\longrightarrow} p_\infty$. By Eq.\eqref{eq:etr}, this means that the eigendecomposition of the vector $\delta p(\cdot,0)$ can include only eigenvectors with stricly negative eigenvalues, and in particular it does not include an $\mathcal N$--component. We can then write, by Eq.\eqref{eq:etr} and the eigenvector expansion,    
\begin{equation*}
\delta p(\cdot,t)\approx \sum_{k}\frac{\langle\mathcal K \delta p(\cdot,0),p_k\rangle}{\langle\mathcal K p_k,p_k\rangle}e^{\lambda_kt}p_k(\cdot)
\end{equation*}
and, by Eq.\eqref{eq:kmfl},
\begin{equation}\label{eq:lmflin}
L_{\mathrm{mf}}(p(\cdot,t))\approx\frac{1}{2}\sum_{k}\frac{\langle\mathcal K \delta p(\cdot,0),p_k\rangle^2}{\langle\mathcal K p_k,p_k\rangle}e^{2\lambda_kt}.
\end{equation}
We see, in particular, that the large-$t$ asymptotic of loss is determined by the distribution of the eigenvalues of $\mathcal K$ and, for a particular initial condition $\delta p(\cdot,0)$, by the coefficients of its eigendecomposition.

For overparametrized models, we expect the subspace $\mathcal N$ to be nontrivial and possibly even highly degenerate. Assuming that for some ground truth $f$ the minimum loss $L_{\mathrm{mf}}=0$ can be achieved, we can ask how to find the actual limiting distribution $p_\infty$ in the space $p_\infty+\mathcal N$ of all minimizers of $L_{\mathrm{mf}}$. In the linearized setting described above, this should be done using the already mentioned condition that expansion of $p(\cdot,0)-p_\infty$ over the eigenvectors of $\mathcal R$ does not contain the $\lambda=0$ component. In Appendix \ref{sec:lin}, we illustrate this observation with a simple example involving a single-element set $X$.  

\subsection{Models with linear output}\label{sec:linout} We describe now another solvable approximation that holds for models with linear output at small values of the linear parameter. Recall that we defined such models as those where $\phi(\mathbf w,\mathbf x)=w_0\widetilde{\phi}(\widetilde{\mathbf w},\mathbf x)$ with some map $\widetilde{\phi}$ and $\mathbf w=(w_0, \widetilde{\mathbf w})$. Observe that for such models, the gradient factor $\nabla\cdot(p\nabla\phi)$ in the transport equation \eqref{eq:transp} can be written as a sum of two terms: 
\[\nabla_{\mathbf w}\cdot(p\nabla_{\mathbf w}\phi)=\frac{\partial p}{\partial w_0}\widetilde{\phi}+w_0\nabla_{\widetilde{\mathbf w}}\cdot(p\nabla_{\widetilde{\mathbf w}}\widetilde{\phi})\]
For small $w_0$ the second term is small and can be dropped. Then, the transport equation simplifies to 
\begin{equation*}\frac{\partial}{\partial t} p(w_0, \widetilde{\mathbf w}, t)= \widetilde{u}(\widetilde{\mathbf w}, t)\frac{\partial}{\partial {w_0}}p(w_0, \widetilde{\mathbf w},t),\end{equation*}
where 
\begin{align}\widetilde{u}(\widetilde{\mathbf w}, t)={}&\int_X\Big(\int \int p(w'_0, \widetilde{\mathbf w}',t)w'_0\widetilde\phi( \widetilde{\mathbf w}',\mathbf x)dw'_0d\widetilde{\mathbf w}'\nonumber\\ \label{eq:qwt}
&-f(\mathbf x)\Big)\widetilde\phi(\widetilde{\mathbf w}, \mathbf x) d\mu(\mathbf x).
\end{align} 
It follows that the distribution $p$ evolves by ``shifting along $w_0$'', separately at each $\widetilde{\mathbf w}$:
\begin{equation}\label{eq:pwo}p(w_0, \widetilde{\mathbf w}, t)=p(w_0+s(\widetilde{\mathbf w}, t), \widetilde{\mathbf w}, t_0),\end{equation}
where the function $s(\widetilde{\mathbf w}, 0)$ satisfies the equation  
\begin{equation}\label{eq:dts}\frac{\partial}{\partial t} s(\widetilde{\mathbf w}, t)= \widetilde{ u}(\widetilde{\mathbf w}, t)\end{equation}
 with the initial condition $s(\cdot,0)\equiv 0$. 
In particular, for each $\widetilde{\mathbf w}$, the integral $\int p(w_0,\widetilde{\mathbf w},t)dw_0$ -- the marginal of $p$ w.r.t. $\widetilde{\mathbf w}$ -- does not depend on  $t$. We will denote this marginal by $\widetilde p(\widetilde{\mathbf w})$, and by $\widehat p$ we will denote the operator of multiplication by $\widetilde p$, i.e. $\widehat pa(\widetilde{\mathbf w})=\widetilde p(\widetilde{\mathbf w})a(\widetilde{\mathbf w}).$

It is convenient to introduce linear operators $\widetilde \Phi, \widetilde \Phi^*$:
\begin{align*}\widetilde \Phi a({\mathbf x}) ={}&\int \widetilde\phi(\widetilde{\mathbf w},\mathbf x)a(\widetilde{\mathbf w}) d \widetilde{\mathbf w},\\
\widetilde \Phi^* b(\widetilde{\mathbf w}) ={}&\int \widetilde\phi( \widetilde{\mathbf w},\mathbf x)b(\mathbf x) d\mu( {\mathbf x}).
\end{align*}
These operators are mutually adjoint w.r.t. to the scalar products $\langle a_1,a_2\rangle\equiv\int a_1(\widetilde{\mathbf w}) a_2(\widetilde{\mathbf w})d\widetilde{\mathbf w}$ and $\langle b_1,b_2\rangle_\mu\equiv\int_X b_1(\mathbf x) b_2(\mathbf x)d\mu(\mathbf x),$ namely
$\langle \widetilde \Phi a, b\rangle_\mu = \langle  a, \widetilde \Phi^*b\rangle.$

Plugging Eq.\eqref{eq:pwo} into Eq.\eqref{eq:qwt} and performing a change of variables, we obtain
\begin{equation}\label{eq:qwmw}\widetilde{ u}(\widetilde{\mathbf w}, t)=\widetilde{ u}(\widetilde{\mathbf w}, 0)-(\widetilde \Phi^*\widetilde \Phi\widehat p s(\cdot, t))(\widetilde{\mathbf w}).\end{equation}
Denote, for brevity, $\mathbf s(t)=s(\cdot,t),$ and $\widetilde{\mathbf u}_0=\widetilde{ u}(\cdot, 0)$. Combining Eqs.\eqref{eq:qwmw} and \eqref{eq:dts}, we obtain a linear first order equation for $\mathbf s$: 
\begin{equation}\label{eq:dtsq}\frac{d}{dt}\mathbf s= \widetilde{\mathbf u}_0-\widetilde \Phi^*\widetilde \Phi\widehat p\mathbf s.\end{equation} 
The operator $\widetilde{\mathcal K}=\widetilde \Phi^*\widetilde \Phi\widehat p$ is self-adjoint and positive semidefinite with respect to the scalar product $\langle \widehat p\cdot,\cdot\rangle$:
\begin{align*}\langle \widehat p\mathbf a_1,\widetilde \Phi^*\widetilde \Phi\widehat p\mathbf a_2\rangle = \langle \widetilde \Phi\widehat p\mathbf a_1, \widetilde \Phi\widehat p\mathbf a_2\rangle_\mu
={}&\langle \widehat p\widetilde \Phi^*\widetilde \Phi\widehat p\mathbf a_1,\mathbf a_2\rangle.
\end{align*}
Assuming $\mathbf q_0$ belongs to the Hilbert space associated with this scalar product, we can write the solution to Eq.\eqref{eq:dtsq} as
\begin{equation}\label{eq:stt}
\mathbf s(t)=t\mathcal P_{0}\widetilde{\mathbf u}_0+(1-e^{-t\widetilde \Phi^*\widetilde \Phi\widehat p})(\widetilde \Phi^*\widetilde \Phi\widehat p)^{-1}\mathcal P_{>0}\widetilde{\mathbf u}_0
\end{equation}
where $\mathcal P_{0},\mathcal P_{>0}$ are the complementary orthogonal projectors to the nullspace and to the strictly positive subspace of $\widetilde \Phi^*\widetilde \Phi\widehat p$, respectively.

Let $\mathbf z=(z(\mathbf x)),$ where 
\[z(\mathbf x)=\int\int p(w'_0, \widetilde{\mathbf w}', 0)w'_0\widetilde\phi( \widetilde{\mathbf w}',\mathbf x)dw'_0d\widetilde{\mathbf w}'-f(\mathbf x).\]

Using the identity $\widetilde{\mathbf u}_0=\widetilde \Phi^*\mathbf z$ and Eq.\eqref{eq:stt}, the loss function can then be written as
\begin{align*}
L_{\mathrm {mf}}(\mathbf p(t))={}&\frac{1}{2}\| \mathbf z-\widetilde \Phi\widehat p\mathbf s(t)\|_\mu^2\\
={}&\frac{1}{2}\|( 1-\widetilde\Phi\widehat p(\widetilde \Phi^*\widetilde \Phi\widehat p)^{-1}
\mathcal P_{>0}\widetilde \Phi^*) \mathbf z\|_\mu^2\\
&+\frac{1}{2}\langle\widetilde \Phi^* \mathbf z, \widehat p e^{-2t\widetilde \Phi^*\widetilde \Phi\widehat p}(\widetilde \Phi^*\widetilde \Phi\widehat p)^{-1}\mathcal P_{>0}\widetilde \Phi^* \mathbf z\rangle.
\end{align*}
The first, constant term on the r.h.s. is half the squared norm of the component of $\mathbf z$ orthogonal to the range of the operator $\widetilde\Phi\widehat p$. The second term converges to 0 as $t\to +\infty$, so the limit of the loss function is given by the first term. Suppose that the first term vanishes and suppose that the positive semidefinite operator $\widetilde \Phi^*\widetilde \Phi\widehat p$ has a pure point spectrum with eigenvalues $\zeta_k\ge 0$ and eigenvectors $\mathbf s_k$. Then Eq.\eqref{eq:stt} can be expanded as
\begin{equation}\label{eq:st}
\mathbf s(t)=t\mathcal P_{0}\widetilde \Phi^* \mathbf z+\sum_{k:\zeta_k>0}\frac{1-e^{-t\zeta_k}}{\zeta_k}\frac{\langle \widehat p\mathbf s_k,\widetilde \Phi^* \mathbf z\rangle}{\langle \widehat p\mathbf s_k,\mathbf s_k\rangle}\mathbf s_k\end{equation}
and the loss function as
\begin{equation}\label{eq:lmfzeta}L_{\mathrm {mf}}(\mathbf p(t))=\frac{1}{2}\sum_{k:\zeta_k>0}\frac{\langle \widehat p\mathbf s_k,\widetilde \Phi^* \mathbf z\rangle^2}{\langle \widehat p\mathbf s_k,\mathbf s_k\rangle}\frac{e^{-2t\zeta_k}}{\zeta_k}.\end{equation}
Thus, the large-$t$ evolution of loss is determined by the eigenvalues of $\widetilde{\mathcal K}$ and, for a particular $\mathbf z$, by the eigendecomposition of $\Phi^* \mathbf z$.

\subsection{Localized Gaussian aproximation}\label{sec:localgauss}
Suppose that, for each $t$, the distribution $p(\cdot,t)$ is approximately Gaussian with a center $\mathbf b=\mathbf b(t)$ and a small covariance matrix $A=A(t)$:
\begin{equation}\label{eq:gauss}p(\mathbf w,t)=(2\pi)^{-d/2}(\det A)^{-1/2}e^{-\frac{1}{2}(\mathbf w-\mathbf b)\cdot A^{-1}(\mathbf w-\mathbf b)}.\end{equation}
Plugging this ansatz into the transport equation and keeping only leading terms in $A^{-1},$ we obtain (see Appendix \ref{sec:appgauss}): 
\begin{equation}\label{eq:dbt}\frac{d\mathbf b}{dt}\approx-\int_X(\phi(\mathbf b,\mathbf x)-f(\mathbf x))\nabla_{\mathbf w}\phi({\mathbf b}, \mathbf x)d\mu(\mathbf x),\end{equation}
and
\begin{equation}\label{eq:dat}\frac{dA}{dt}\approx AH+HA,\end{equation}
where
\[H=-\int_X(\phi(\mathbf b,\mathbf x)-f(\mathbf x))D_{\mathbf w}\phi({\mathbf b}, \mathbf x)d\mu(\mathbf x)\]
and $D_{\mathbf w}$ denotes the Hessian w.r.t. $\mathbf w.$ Not surprisingly, Eq.\eqref{eq:dbt} coincides with the gradient descent equation for the original model \eqref{eq:approx} with $N=1$. Now consider Eq.\eqref{eq:dat}. If the matrix $H$ is diagonalized,  $H=\operatorname{diag}(\lambda_1,\ldots,\lambda_d),$ then Eq.\eqref{eq:dat} is also diagonalized in the basis of matrix elements:
\[\frac{dA_{km}}{dt}= (\lambda_k+\lambda_m)A_{km}.\]
In particular, our ``pointlike'' Gaussian solution $p$ is unstable (expansive) iff $H$ has positive eigenvalues.

Consider now the special case of a model with linear output, $\phi(\mathbf w,\mathbf x)=w_0\widetilde\phi(\widetilde{\mathbf w},\mathbf x),$ and a distribution $p$ close to a ``pointlike stationary distribution'', so that $\frac{d\mathbf b}{dt}=0$. We write the Hessian and accordingly $H$ in the block form:
\[D_{\mathbf w}=
\begin{pmatrix}
D_{w_0w_0} &
D_{w_0\widetilde{\mathbf w}}\\
D_{w_0\widetilde{\mathbf w}}^t &
D_{\widetilde{\mathbf w}\widetilde{\mathbf w}}\end{pmatrix},\quad 
H=\begin{pmatrix}
H_{w_0w_0} &
H_{w_0\widetilde{\mathbf w}}\\
H_{w_0\widetilde{\mathbf w}}^t &
H_{\widetilde{\mathbf w}\widetilde{\mathbf w}}\end{pmatrix}.\]
Observe that, by linearity of $\phi$ in $w_0$, $D_{w_0w_0}\phi=0$ and hence $H_{w_0w_0}=0$. Also, observe that $D_{w_0\widetilde{\mathbf w}}\phi=\nabla_{\widetilde{\mathbf w}}\widetilde\phi=\frac{1}{w_0}\nabla_{\widetilde{\mathbf w}}\phi.$ It follows that $H_{w_0\widetilde{\mathbf w}}=\frac{1}{b_0}\frac{d \widetilde {\mathbf b}}{dt}$, where we have denoted $\mathbf b=(b_0,\widetilde {\mathbf b}).$ By assumption, $\frac{d \widetilde {\mathbf b}}{dt}=0.$ If  $w_0\ne0$, this implies that $H_{w_0\widetilde{\mathbf w}}=0$. We conclude that the matrix $H$ has the block form   
\begin{equation}\label{eq:h}H=\begin{pmatrix}
0 &
0\\
0 &
H_{\widetilde{\mathbf w}\widetilde{\mathbf w}}\end{pmatrix}.\end{equation}
This shows in particular that a distribution $p$ close to a pointlike stationary distribution evolves only in the $\widetilde{\mathbf w}$-component.      

\section{Application to free-knot linear splines}\label{sec:splines}
In this section we apply the developed general theory to the model of piecewise linear free-knot splines, which can be viewed as a simplified neural network acting on the one-dimensional input space. Specifically, let $X=[0,1],$ $\mu$ be the Lebesgue measure on $[0,1],$ and \begin{equation}\label{eq:spline}\phi(\mathbf w,x)\equiv\phi(c,h,x)=c(x-h)_+,\quad \mathbf w=(c,h)\in\mathbb R^2,\end{equation}
where $a_+\equiv \max(a,0)$ is the ReLU activation function. We will perform numerical simulations for this model with $N=100$ in Eq.\eqref{eq:approx}.

Note that еру model \eqref{eq:spline} is highly degenerate in the sense that the same prediction $\widehat f_{\mathrm {mf}}$  can be obtained from multiple distributions $p$ on the parameter space $\mathbb R^2=\{(c,h)\}$: 
\begin{equation}\label{eq:ffm}\int_{\mathbb R}\int_{\mathbb R}p(c,h)c(x-h)_+ dcdh=\widehat f_{\mathrm {mf}}(x), \quad \forall x\in[0,1].\end{equation}
Using the identity $\frac{d^2}{dh^2}(x-h)_+=\delta(x-h)$ with Dirac delta, we get:
\begin{equation}\label{eq:pcx}\int_{\mathbb R}p(c,x)cdc=\frac{d^2\widehat f_{\mathrm {mf}}}{dx^2}(x), \quad \forall x\in[0,1].\end{equation}
The derivative $\frac{d^2\widehat f_{\mathrm {mf}}}{dx^2}(\cdot)$ determines the prediction $\widehat f_{\mathrm {mf}}$ up to two constants, for example, $\widehat f_{\mathrm {mf}}(0)$ and $\frac{d\widehat f_{\mathrm {mf}}}{dx}(0)$, which can also  be expressed in terms of the distribution $p$:
\begin{align}\label{eq:f0}
-\int_{-\infty}^0\Big(\int_{\mathbb R} p(c,h)cdc\Big)hdh =& \widehat f_{\mathrm {mf}}(0)\\
\label{eq:f'0}
\int_{-\infty}^0\Big(\int_{\mathbb R} p(c,h)cdc\Big)dh  =& \frac{d\widehat f_{\mathrm {mf}}}{dx}(0)\end{align}

Conditions \eqref{eq:pcx} on distributions $p(\cdot,x)$ are independent at different $x$ and leave infinitely many degrees of freedom for these distributions. This property is of course shared by all models with linear output.  

\subsection{Global optima and stationary distributions}\label{sec:stationary}

For a given ground truth $f$, setting $\widehat f_{\mathrm {mf}}=f$ in Eqs.\eqref{eq:pcx}-\eqref{eq:f'0} gives us a criterion for the distribution $p$ to be a global minimizer of the loss $L_{\mathrm{mf}}$. 

More generally, we can ask what are stationary distributions of the dynamics (in the sense of section \ref{sec:mfloss} and proposition \ref{prop:equiv}). Their complete general characterization  is rather cumbersome, so we just consider the particular example of the ground truth function $f(x)=x^2$ (this case is easier thanks to the constant convexity of $f$). One can then show the following properties (see Appendix \ref{sec:astationary}). First, any distribution supported on the subset $\{(c,h):h\ge 1\}$ is stationary. Second, consider the marginal distribution $\widetilde p(h)=\int p(c,h)dc$ and its restriction to the segment $[0,1]$. Then, for a stationary $p$, either  $\operatorname{supp}\widetilde p(h)\cap [0,1]=[0,1]$ and $p$ is a global minimizer, or $\operatorname{supp}\widetilde p(h)\cap [0,1]$ consists of a finite number of equidistantly spaced atoms in $[0,1]$ (and the approximation $\widehat f_{\mathrm {mf}}$ is a piecewise linear spline). In particular, if $\operatorname{supp}\widetilde p(h)$ consists of a single point $h_*$, then $h_*\ge 1$ or $h_*=\frac{\sqrt{6}-1}{5}.$   

We examine in more detail the special case when a stationary measure is a single atom, i.e. $p(c,h)=\delta(c-c_*)\delta(h-h_*)$. There are two possibilities: either $h_*\ge 1$ and then $c_*$ is arbitrary, or $h_*=\frac{\sqrt{6}-1}{5}$ and then $c_*=\frac{4+\sqrt{6}}{5}.$ In Fig.\ref{fig:convpointlike} we show a series of simulations where the initial distribution is an atom, $p(c,h,0)=\delta(c-c_0)\delta(h-h_0)$ with some $c_0,h_0$. We observe such distributions to converge to stationary atomic distributions of one of the above two kinds. 

We consider now ``pointlike'' Gaussian initial distributions $p(c,h,0)=\frac{1}{2\pi\sigma^2}e^{-((c-c_0)^2+(h-h_0)^2)/(2\sigma^2)}$, with a small standard deviation $\sigma.$ Depending on $c_0, h_0,$ we observe two different patterns of evolution of $p$. If the limit of the perfectly atomic distribution ($\sigma=0$) is a stationary point with $h_*=1,$ then the evolution is stable. In contrast, if the limit of the atomic distribution is $(c_*, h_*)=(\frac{4+\sqrt{6}}{5},\frac{\sqrt{6}-1}{5})$ then the evolution is unstable: near this point the weights diverge rapidly along $h$, and then form a curve approaching a globally minimizing distribution spread relatively uniformly over $h\in[0,1],$ see Fig.\ref{fig:stability}. The existence of two patterns is explained by the linear stability study of section \ref{sec:localgauss}: the pattern depends on the sign of $H_{\widetilde{\mathbf w} \widetilde{\mathbf w}}(\equiv H_{hh})$ in Eq.\eqref{eq:h}. By Eq.\eqref{eq:spline} we have $D_{h}\mathbf \phi(c,h,x)=c\delta(h-x)$ and hence, at a stationary point $(c_*, h_*)$,  $H_{hh}=c_*f(h_*)=c_*h_*^2$. Thus, stationary points with $c_*<0$ are stable and contractive in the $h$ direction, while the stationary point  
$(c_*, h_*)=(\frac{4+\sqrt{6}}{5},\frac{\sqrt{6}-1}{5})$ is unstable and expansive in the $h$ direction.

\begin{figure}[h]
\centering
\includegraphics[scale=0.7,clip,trim=3mm 4mm 3mm 9mm]{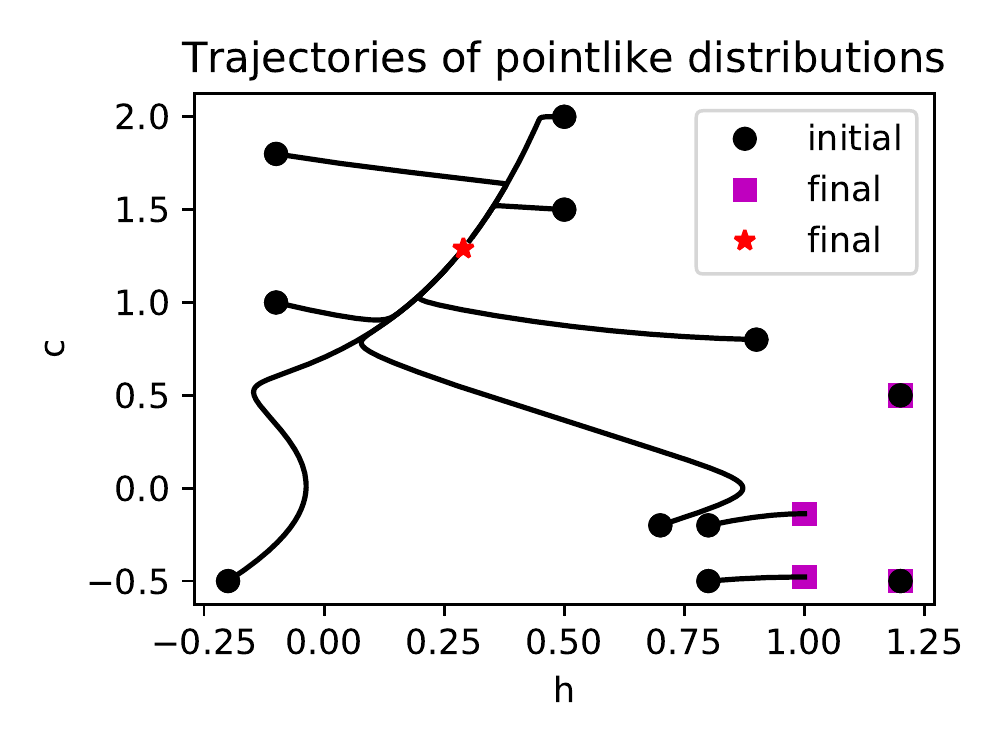}
\caption{Trajectories of various atomic distributions $p(c,h,t)=\delta(c-c(t))\delta(h-h(t))$ converging to stationary atomic distributions.}\label{fig:convpointlike}
\end{figure}

\begin{figure*}[h]
\centering
%
\includegraphics[scale=0.53,clip,trim=10 0 0mm 9mmm]{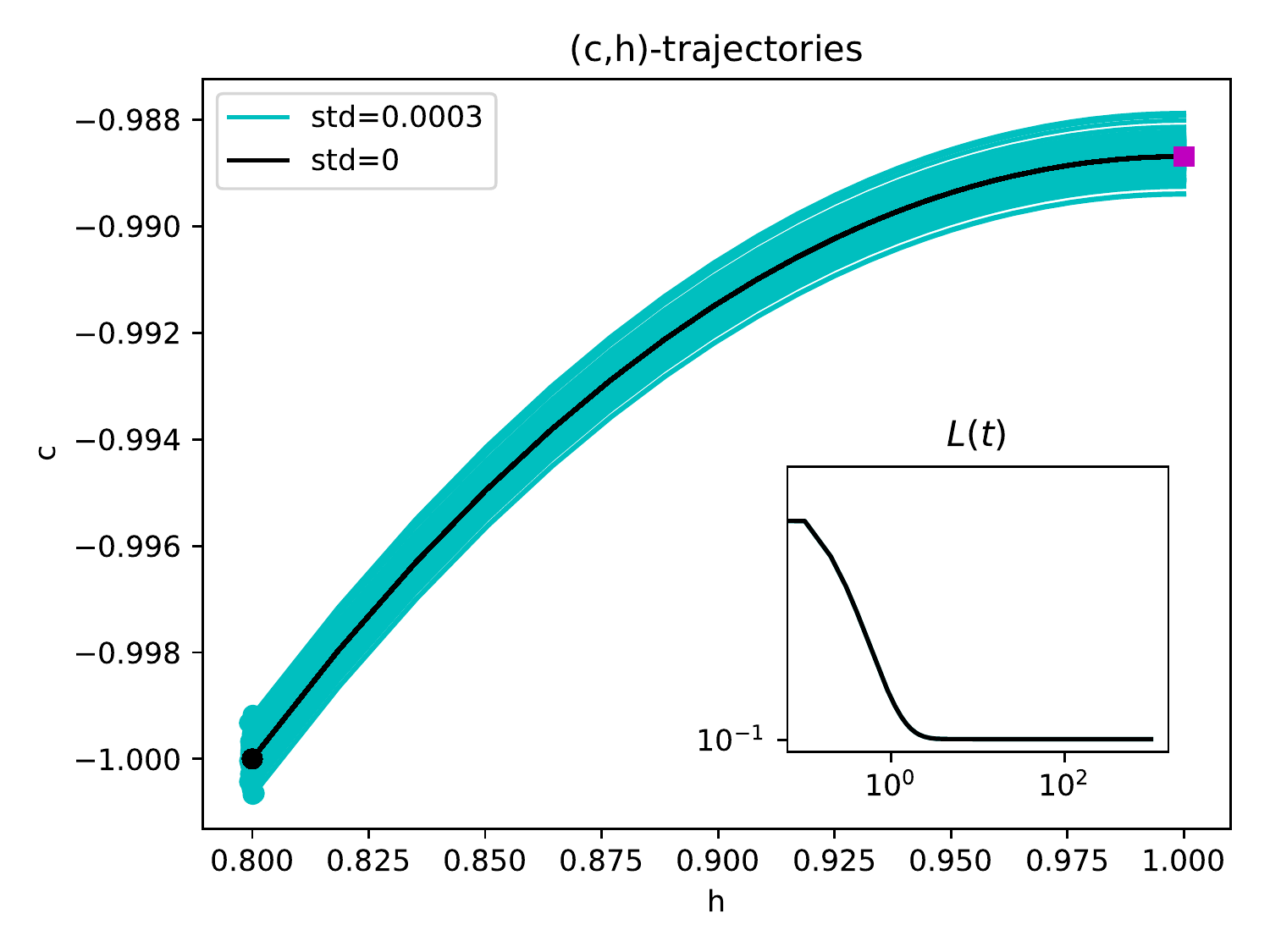}
%
\includegraphics[scale=0.53,clip,trim=0 0 0 9mmm]{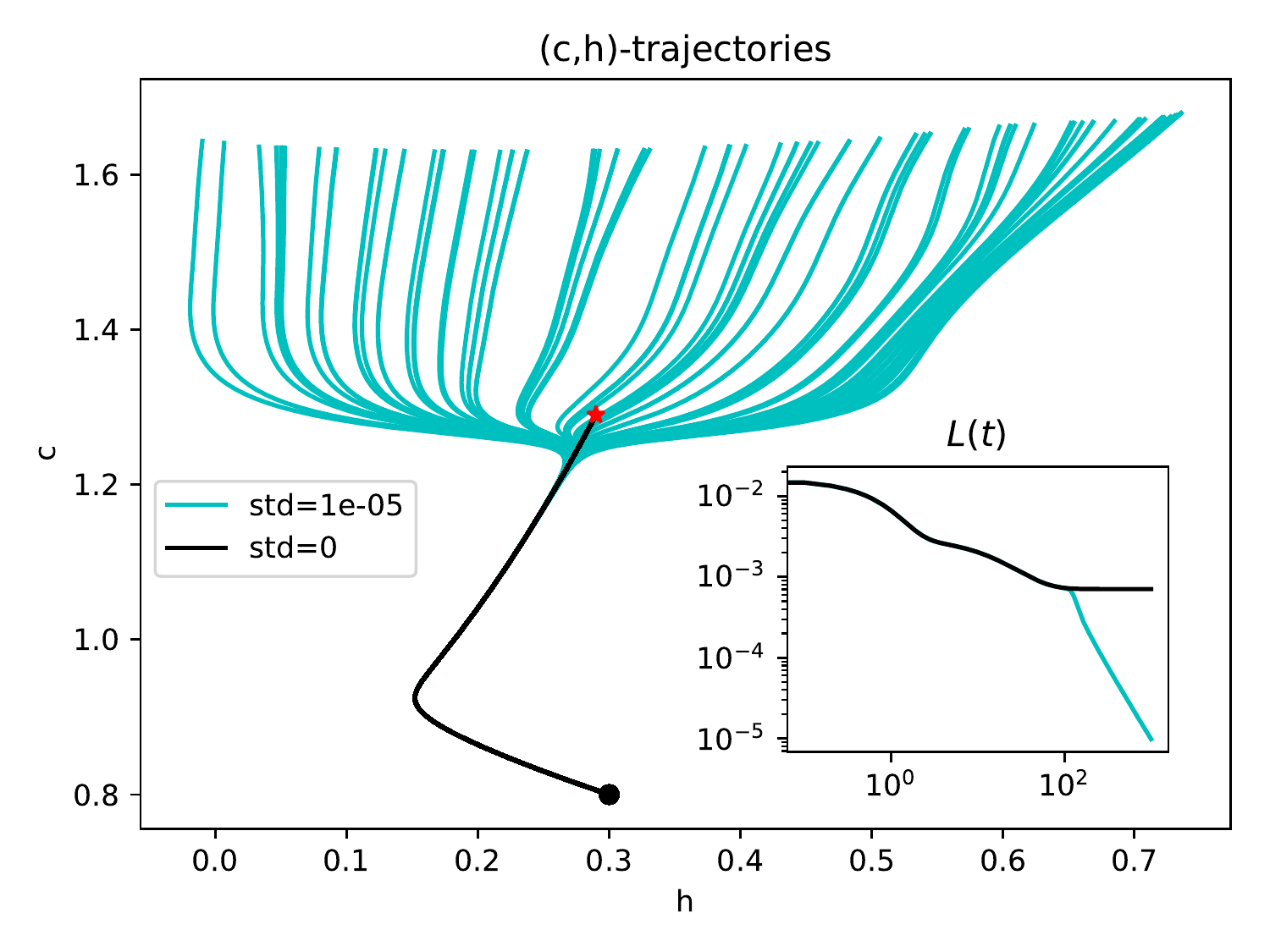}
%
\caption{Trajectories of stable (left) or unstable (right) stationary pointlike distributions. If the initial weights $(c_n,h_n)$ are sampled from a distribution with a small positive variance, in the unstable case the trajectories of the weights spread along the $h$ axis when approaching the stationary point.}\label{fig:stability}
\end{figure*}

\begin{figure*}[h]
\begin{subfigure}[b]{0.32\textwidth}
\includegraphics[scale=0.57,clip,trim=3mm 4mm 0 3mm]{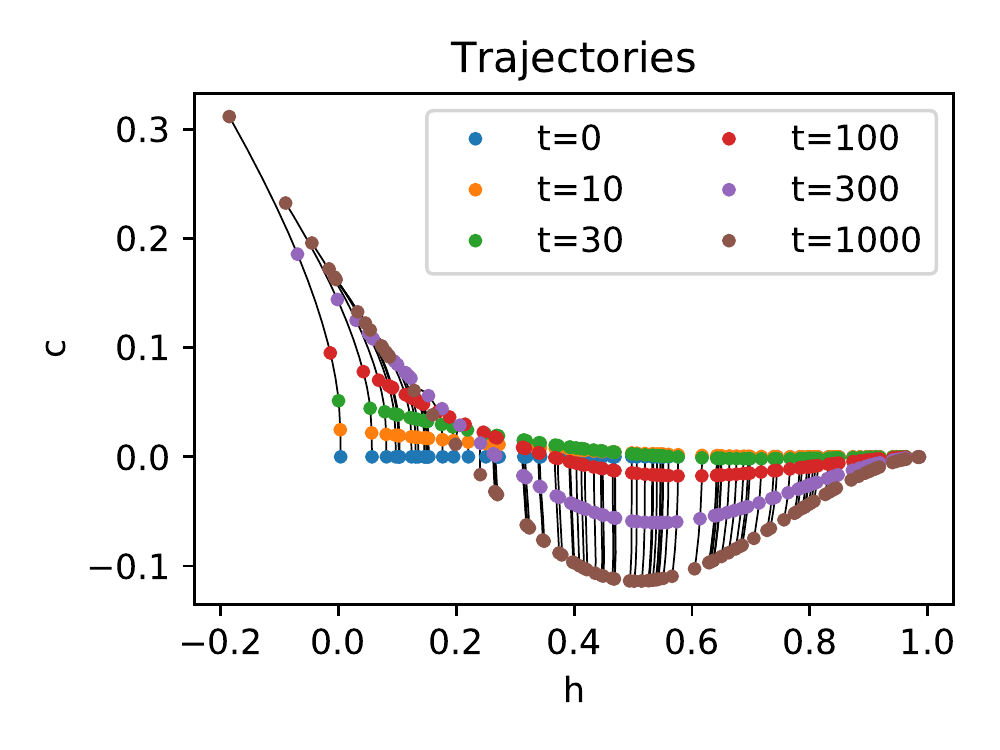}
\caption{}
\end{subfigure}
\begin{subfigure}[b]{0.32\textwidth}
\includegraphics[scale=0.57,clip,trim=0mm 4mm 0 3mm]{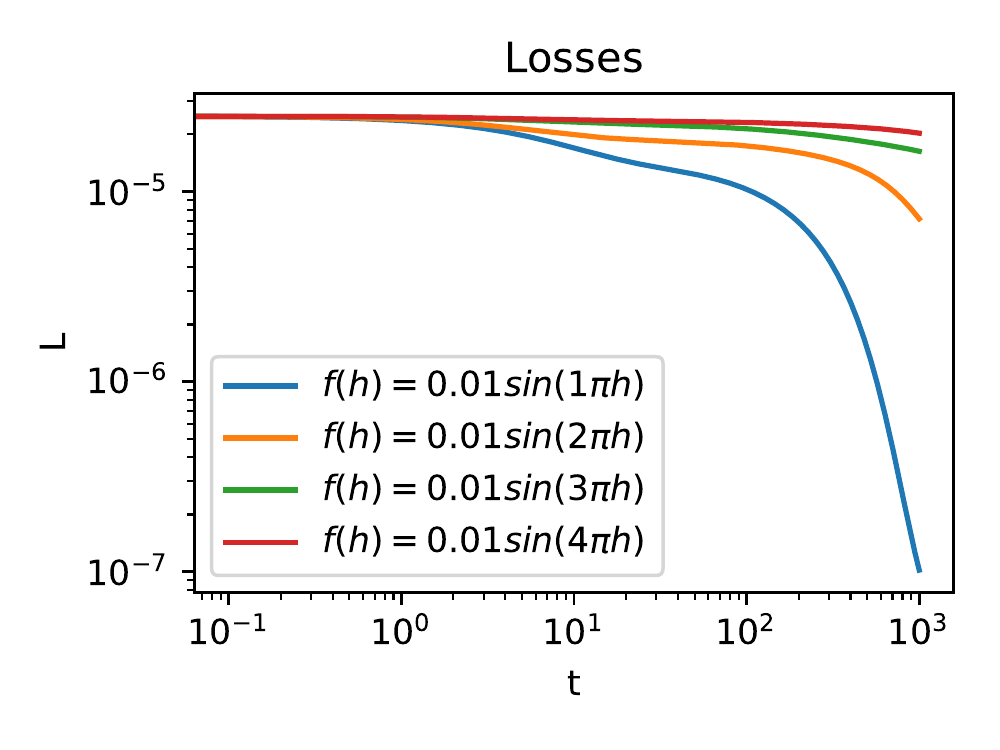}
\caption{}
\end{subfigure}
\begin{subfigure}[b]{0.32\textwidth}
\includegraphics[scale=0.57,clip,trim=-5mm 4mm 6mmm 3mm]{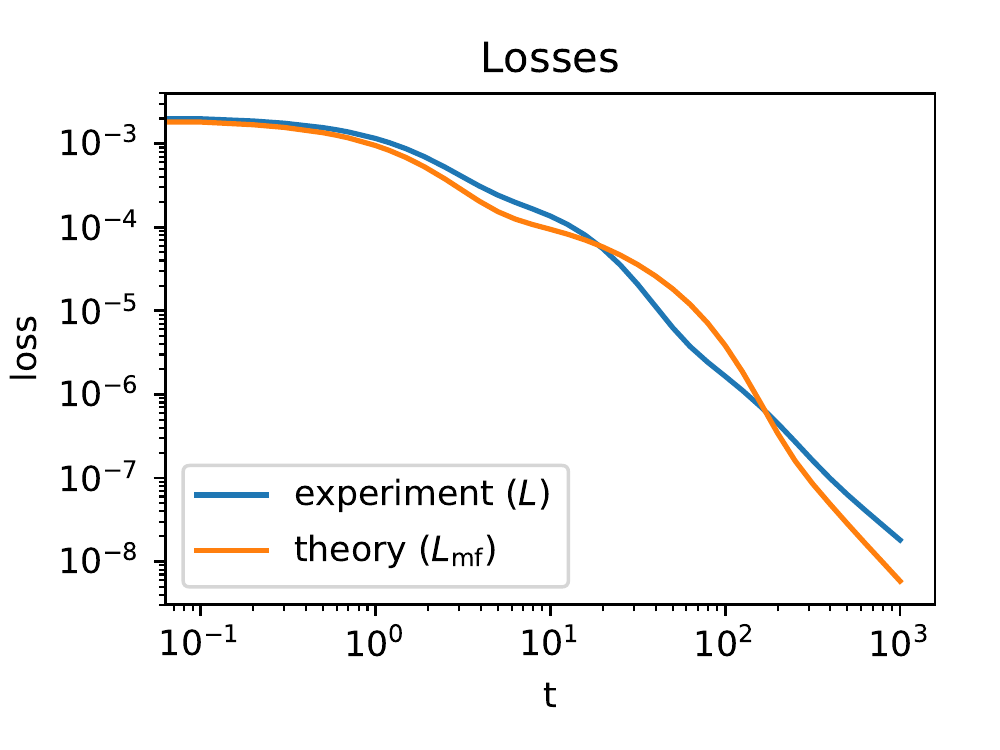}
\caption{}
\end{subfigure}
\caption{(a) The small-$c$ scenario: trajectories of the weights $(c_n,h_n)$. (b) The small-$c$ scenario: experimental losses for several ground truth functions $f$. (c) Linearization near a global minimizer: comparison of experimental loss ($L$) with the theoretical expansion \eqref{eq:lmfptsum} ($L_{\mathrm{mf}}$).}\label{fig:exptheorycompare}
\end{figure*}

\subsection{The small-$c$ approximation}\label{sec:smallc}
We can solve our model approximately in the region of small $c$ as described in section \ref{sec:linout}. Suppose that the initial distribution is $p(c,h,0)=\delta(c)\mathbf 1_{[0,1]}(h).$ As discussed in section \ref{sec:linout}, at small $c$ we expect   $p$ to evolve by approximately shifting along $c$, with some shift function $s(h)$. We observe indeed in an experiment that $p(c,h,t)\approx\delta(c+s(h,t))\mathbf 1_{[0,1]}(h)$, see Fig.\ref{fig:exptheorycompare}(a).  To further understand this dynamics, recall the approximate solutions \eqref{eq:st},\eqref{eq:lmfzeta} (where $\mathbf z=-f$, by our choice of the initial distribution). In our case, $\widehat p$ is the operator of multiplication by the indicator function $1_{[0,1]}(h)$, and the operator $\widetilde {\mathcal K}=\widetilde \Phi^*\widetilde \Phi\widehat p$ can be viewed as a self-adjoint positive definite operator in $L^2([0,1])$ with the kernel
\begin{equation}\label{eq:wtkh}\widetilde K(h,h')=\int_{0}^1(x-h)_+(x-h')_+dx.\end{equation}
This integral operator has a discrete spectrum with eigenvalues $\zeta_k =(1+o(1)) ((\frac{1}{2}+k)\pi)^{-4}, k=0,1,\ldots$, and for large $k$ the eigenfunctions have the form $s_k(h)\approx \cos(\zeta^{-1/4}h+\frac{\pi}{4})$ 
(see Appendix \ref{sec:adiag}). The eigenvalues decrease quite rapidly, e.g. $\zeta_0\approx 0.08, \zeta_1\approx 2\cdot 10^{-3}, \zeta_2\approx 2.6\cdot 10^{-4}$. For any ground truth map $f\in L^2[0,1]$, the loss converges to 0, but convergence is quite slow if the eigendecomposition of $\widetilde \Phi^*f$ contains a significant large-$k$ component. In particular, one can roughly estimate from Eq.\eqref{eq:lmfzeta} that if $\widetilde \Phi^*f=\mathbf s_k,$ then one needs $t\asymp \zeta^{-1}_k\approx (\pi k)^4$ for a noticeable decrease of the loss value. In Fig.\ref{fig:exptheorycompare}(b) we run numerically the gradient descent for several functions $f(h)=0.01\sin(k\pi h)$ and observe indeed a substantial convergence slowdown with growing $k$, in agreement with the  theoretical prediction.

\subsection{Linearization near a global minimizer}\label{sec:splineslinglob}
Now we consider the approximate solution of the loss dynamics near a global optimum (Sec.\ref{sec:linglobmin}) and also make a more careful comparison of the theoretical asymptotic with the numerical solution. To make analysis easier, we consider the simplified network model without the linear weight $c$, i.e. we let $\phi(h,x)=(x-h)_+$. Obviously, this model is less expressive than the full model \eqref{eq:spline}, but it is sufficient for convex ground truths $f$ with $f(0)=\frac{df}{dh}(0)=0$: in this case there is a unique global minimizer $p_\infty(h)=\frac{d^2 f}{dh^2}(h).$ 

We consider a particular example of ground truth, $f(x)=\frac{1}{2}x^2$. Then, the mean field loss has the global minimizer, $p_\infty=\mathbf 1_{[0,1]}$. The operator $\mathcal R$ defined in Eq.\eqref{eq:r} has in this case a discrete spectrum with eigenvalues $\lambda_k=-\mu_k^{-2},k=0,1,\ldots,$ where $\mu_k=\frac{\pi}{2}+k\pi$ (see Appendix \ref{sec:ar}). The corresponding eigenfunctions are $p_k(h)=\sin(\mu_kh)-\frac{1}{\mu_k}\delta(h).$ The mean-field loss expansion \eqref{eq:lmflin} for a solution $p=p_\infty+\delta p$ can be written as
\begin{equation}\label{eq:lmfptsum}L_{\mathrm{mf}}(p(\cdot,t))=\sum_{k=0}^\infty\Big(\int_0^1\sin(\mu_kh)\delta p(h,0)dh\Big)^2\frac{e^{-2t/\mu_k^{2}}}{\mu_k^4}.\end{equation}
We perform a numerical gradient descent with the initial distribution $p(\cdot,0)=2\mathbf 1_{[0.3,0.8]}$ (i.e., the initial weights $h_n$ are randomly chosen in the interval $[0.3,0.8]$). In Fig.\ref{fig:exptheorycompare}(c) we compare the respective experimental loss with the above theoretical prediction (where the series is truncated at $k=10^4$) and observe a reasonable agreement between the two curves. 

\section*{Acknowledgment}
The author thanks Maxim Panov, Anton Zhevnerchuk and Ivan Anokhin for useful discussions. 

\bibliography{../main.bib}

\onecolumn

\appendix

\section{Supplementary material}
\subsection{The continuity equation (Eq.\eqref{eq:conteq})}\label{sec:conteq}
Suppose that the evolution of $\mathbf W$ is governed by the vector field $\mathbf G(\mathbf W)$, i.e. $\frac{d}{dt} \mathbf W = \mathbf G$ as in Eq.\eqref{eq:dtw}. Let $\mathbf W = Q(\mathbf W_0,t)$, i.e. $Q$ gives the solution of this differential equation with the initial condition $\mathbf W(t=0)=\mathbf W_0$ at time $t$. Suppose that we interpret $\mathbf G$ as a probability current, and the probability density is $P(\mathbf W,t)$. Then, the local conservation of probability reads \[\frac{d}{dt}\int_{Q(\Omega,t)}P(\mathbf W,t)d\mathbf W=0\]
for any domain $\Omega.$ Making  a  change of variables,
$$\int_{Q(\Omega,t)}P(\mathbf W,t)d\mathbf W=\int_{\Omega}P(Q(\mathbf W_0,t),t)\det \frac{\partial Q}{\partial \mathbf W_0}d\mathbf W_0,$$
which implies
$$\frac{d}{dt}\Big[P(Q(\mathbf W_0,t),t)\det \frac{\partial Q}{\partial \mathbf W_0}\Big]=0$$
for any $\mathbf W_0$. At $t=0$, using the identity $\frac{d}{dt}\det \frac{\partial Q}{\partial \mathbf W_0}\Big|_{t=0}=\nabla_{\mathbf W}\cdot \mathbf G$, we get
$$\frac{\partial}{\partial t}P+\frac{d}{dt} \mathbf W\cdot\nabla_{\mathbf W}P+P\nabla_{\mathbf W}\cdot \mathbf G=0,$$
which is the desired Eq.\eqref{eq:conteq}.

\subsection{Time derivative of the mean-field loss (Eq.\eqref{eq:dtlossmf})}\label{sec:derivloss}

\begin{align*}
\frac{d}{dt}L_{\mathrm{mf}}(p(\cdot,t)) =& \frac{d}{dt}\frac{1}{2}\int_X \Big(\int p(\mathbf w,t)\phi(\mathbf w,\mathbf x)d\mathbf w-f(\mathbf x)\Big)^2 d\mu(\mathbf x)\\
=&\int_X \Big(\int p(\mathbf w',t)\phi(\mathbf w',\mathbf x)d\mathbf w'-f(\mathbf x)\Big)\Big(\int \frac{\partial}{\partial t}p(\mathbf w,t)\phi(\mathbf w,\mathbf x)d\mathbf w\Big) d\mu(\mathbf x)\\
=&\int_X \Big(\int p(\mathbf w',t)\phi(\mathbf w',\mathbf x)d\mathbf w'-f(\mathbf x)\Big)\Big(\int \big(\nabla_{\mathbf w}\cdot( p(\mathbf w,t)\nabla_{\mathbf w} u(\mathbf w,t))\big)\phi(\mathbf w,\mathbf x)d\mathbf w\Big) d\mu(\mathbf x)
\\
=&-\int_X \Big(\int p(\mathbf w',t)\phi(\mathbf w',\mathbf x)d\mathbf w'-f(\mathbf x)\Big)\Big(\int p(\mathbf w,t)\nabla_{\mathbf w} u(\mathbf w,t))\cdot \nabla_{\mathbf w}\phi(\mathbf w,\mathbf x)d\mathbf w\Big) d\mu(\mathbf x)
\\
=&-\int p(\mathbf w,t)\nabla_{\mathbf w} u(\mathbf w,t))\cdot \Big(\int_X \Big(\int p(\mathbf w',t)\phi(\mathbf w',\mathbf x)d\mathbf w'-f(\mathbf x)\Big)\nabla_{\mathbf w}\phi(\mathbf w,\mathbf x) d\mu(\mathbf x)\Big)d\mathbf w
\\
=&-\int p(\mathbf w,t)\nabla_{\mathbf w} u(\mathbf w,t))\cdot \nabla_{\mathbf w} u(\mathbf w,t))d\mathbf w
\\
=& - \int p(\mathbf w, t) |\nabla_{\mathbf w} u(\mathbf w,t)|^2 d\mathbf w,
\end{align*}
where we used definition \eqref{eq:lmfdef} (first line), Eq.\eqref{eq:transp22} (third line), integrated by parts (fourth line), changed the integration order (fifth line) and used Eq.\eqref{eq:transp21} (sixth line). 

\subsection{Finding the limit distribution in the linearized setting: a basic example (see the end of section \ref{sec:linglobmin})}\label{sec:lin}
Suppose that $X=\{0\}$ is a single-element set with $\mu(0)=1$, $\mathbf w\equiv w\in \mathbb R$, and $\phi(w,0)=w.$ Denote $y=f(0)$. The transport equation \eqref{eq:transp} then simplifies to
\[\frac{\partial}{\partial t} p(w, t)
= \Big(\int  p(w',t)w'd w'-y\Big) 
\frac{\partial}{\partial w}p( w,t).\]
The exact solution with the initial condition $p(\cdot, 0)$ is 
\[p(w, t) = p(w+(y-y_0)(e^{-t}-1),0),\]   
where we have denoted 
\begin{equation}\label{eq:ay0}y_0=\int  p(w',0)w'd w'.\end{equation}
In particular, the limiting distribution is
\begin{equation}\label{eq:apinfw}p_\infty(w)=\lim_{t\to+\infty}p(w,t)=p(w-y+y_0,0).\end{equation}
The loss function evolves as
\begin{equation}\label{eq:almfp}L_{\mathrm{mf}}(p(\cdot,t))=\frac{(y-y_0)^2}{2}e^{-2t}.\end{equation}
Now we check if these exact results can be recovered from the linear approximation about a global minimizer of $L_{\mathrm{mf}}.$ The kernels $K, R$ have the form
\[K(w,w')=ww',\quad R(w,w')=\frac{dp_\infty}{dw}(w)w'.\]  
The subspace $\mathcal N=\{q:\langle Kq,q\rangle=0\}$ has the form \[\mathcal N=\Big\{q:\int q(w)wdw=0\Big\}.\]
The quotient space $\mathcal H'_K=\mathcal H_K/\mathcal N$ is one-dimensional and contains a unique eigenvector of the operator $\mathcal R$. The corresponding eigenvector in $\mathcal H_K$ is $\frac{dp_\infty}{dw}$:
\[\mathcal R \frac{dp_\infty}{dw}(w)=\frac{dp_\infty}{dw}(w)\int \frac{dp_\infty}{dw}(w')w'dw'=\frac{dp_\infty}{dw}(w)\int w'dp_\infty(w')=-\frac{dp_\infty}{dw}(w)\int p_\infty(w')dw'=-\frac{dp_\infty}{dw}(w),\]
i.e., the corresponding eigenvalue is $\lambda_1=-1$. Now we find the limiting distribution $p_\infty$. We know that the difference $p(\cdot,0)-p_\infty$ must be an eigenvector of $\mathcal R$ with a strictly negative eigenvalue, i.e.
\[p(\cdot,0)=p_\infty+c\frac{dp_\infty}{dw}\]
with some constant $c$. We can find this constant using the condition $\int p_\infty(w')w'dw'=y$ and Eq.\eqref{eq:ay0}:
\[y_0-y=\int  (p(w',0)- p_\infty(w'))w'd w'=\int  c\frac{dp_\infty}{dw}(w')w'd w'=-c.\]
It follows that \begin{equation}\label{eq:apw0}p(w,0)=p_\infty(w)+(y-y_0)\frac{dp_\infty}{dw}(w).\end{equation}
Observe that if the distribution $p_\infty$ is sufficiently regular, then we can view the r.h.s. of this equation as the first two terms of the Taylor expansion of $p_\infty(w+y-y_0)$ in the small parameter $y-y_0.$ Thus, within the linear approximation, $p_\infty(w)\approx p(w+y_0-y,0)$, which matches the exact solution \eqref{eq:apinfw} of the transport equation.\footnote{Alternatively, this approximate identity can be derived from the solution $p_\infty(w)=\int_0^\infty p(w+(y_0-y)s,0)e^{-s}ds$ of Eq.\eqref{eq:apw0} by using the linear approximation $p(w+(y-y_0)s,0)\approx p(w,0)+(y-y_0)s\frac{\partial p}{\partial w}(w,0)$.}

The approximate dynamics  of the loss function is given by expansion \eqref{eq:lmflin} that in the present case consists of a single term:
\[L_{\mathrm{mf}}(p(\cdot,t))\approx\frac{1}{2}\frac{\langle\mathcal K [(y-y_0)\frac{dp_\infty}{dw}],\frac{dp_\infty}{dw}\rangle^2}{\langle\mathcal K \frac{dp_\infty}{dw},\frac{dp_\infty}{dw}\rangle}e^{2\lambda_1t}=\frac{(y-y_0)^2}{2}e^{-2t},\]
which agrees precisely with the exact value  \eqref{eq:almfp}.

\subsection{Localized Gaussian approximation: derivation of Eqs.\eqref{eq:dbt} and \eqref{eq:dat}}\label{sec:appgauss}
If $p$ has the Gaussian form \eqref{eq:gauss}, we have
\begin{equation}\label{eq:adtlnp}\frac{\partial}{\partial t}\ln p(\mathbf w,t)=\frac{d}{dt}\Big(\frac{1}{2}\ln \det A\Big)+\frac{d\mathbf b}{dt}\cdot A^{-1}(\mathbf w-\mathbf b)+\frac{1}{2}(\mathbf w-\mathbf b)\cdot A^{-1}\frac{dA}{dt}A^{-1}(\mathbf w-\mathbf b).\end{equation}
On the other hand, from the transport Eqs.\eqref{eq:transp21},\eqref{eq:transp22},
\begin{align}\nonumber
\frac{\partial}{\partial t}\ln p(\mathbf w,t)={}&\frac{1}{p(\mathbf w,t)}\nabla_{\mathbf w}\cdot(p(\mathbf w,t)\nabla_{\mathbf w}u(\mathbf w, t))\\\nonumber
={}&\nabla_{\mathbf w}\ln p(\mathbf w,t)\cdot\nabla_{\mathbf w}u(\mathbf w, t)+\Delta_{\mathbf w}u(\mathbf w, t)\\\nonumber
={}& \int_X\Big(\int p(\mathbf w',t)\phi(\mathbf w',\mathbf x)d\mathbf w'-f(\mathbf x)\Big)\Big(-A^{-1}(\mathbf w-\mathbf b)\cdot\nabla_{\mathbf w}\phi({\mathbf w}, \mathbf x)+\Delta_{\mathbf w}\phi({\mathbf w}, \mathbf x)\Big)d\mu(\mathbf x)\\\nonumber
\approx{}&\int_X\Big(\phi(\mathbf b,\mathbf x)-f(\mathbf x)\Big)\Big(-A^{-1}(\mathbf w-\mathbf b)\cdot\nabla_{\mathbf w}\phi({\mathbf w}, \mathbf x)+\Delta_{\mathbf w}\phi({\mathbf w}, \mathbf x)\Big)d\mu(\mathbf x)\\\nonumber
\approx{}&\int_X(\phi(\mathbf b,\mathbf x)-f(\mathbf x))\Big(-A^{-1}(\mathbf w-\mathbf b)\cdot\nabla_{\mathbf w}\phi({\mathbf w}, \mathbf x)\Big)d\mu(\mathbf x)\\
\approx{}&\int_X(\phi(\mathbf b,\mathbf x)-f(\mathbf x))\Big(-A^{-1}(\mathbf w-\mathbf b)\cdot\big(\nabla_{\mathbf w}\phi({\mathbf b}, \mathbf x)+D_{\mathbf w}\phi({\mathbf b}, \mathbf x)(\mathbf w-\mathbf b)\big)\Big)d\mu(\mathbf x),\label{eq:adtlnp2}
\end{align}
where $D_{\mathbf w}$ denotes the Hessian w.r.t. $\mathbf w$. Here, in line 4 we used the approximation $p(\mathbf w,t)\approx \delta(\mathbf w-\mathbf b)$, in line 5 we dropped the term $\Delta_{\mathbf w} \phi$ as small w.r.t $A^{-1}$, and in line 6 we linearly expanded $\nabla_{\mathbf w}\phi$ about $\mathbf w=\mathbf b$.

Now, we compare the terms in Eqs.\eqref{eq:adtlnp},\eqref{eq:adtlnp2} that are linear or quadratic in $\mathbf w-\mathbf b$. Comparison of the linear terms gives us
\[\frac{d\mathbf b}{dt}=-\int_X(\phi(\mathbf b,\mathbf x)-f(\mathbf x))\nabla_{\mathbf w}\phi({\mathbf b}, \mathbf x)d\mu(\mathbf x).\]
Comparison of the quadratic terms gives us
\[\frac{dA}{dt}=AH+HA,\]
where
\[H=-\int_X(\phi(\mathbf b,\mathbf x)-f(\mathbf x))D_{\mathbf w}\phi({\mathbf b}, \mathbf x)d\mu(\mathbf x).\]

\subsection{Stationary distributions of the linear spline model (section \ref{sec:stationary})}\label{sec:astationary}
We aim to describe stationary distributions  of the transport equation associated with the spline model \eqref{eq:spline}.  To this end, it is convenient to apply condition 2) of proposition \ref{prop:equiv}, observing that in our spline model $\nabla_{\mathbf w} \phi(\mathbf w, x)=\nabla_{(c,h)} \phi(c,h, x)=((x-h)_+, -c\mathbf 1_{[h,+\infty)})$. Using the definition \eqref{eq:transp21} of $u$, we obtain this criterion: a distribution $p$ is stationary iff for any $(c,h)\in\operatorname{supp} p\cap\{c\ne 0, h<1\}$ the corresponding error function $\widehat f_{\mathrm {mf}}-f$ is orthogonal to linear functions in $L^2([h_+,1])$, and for any $(0,h)\in\operatorname{supp} p\cap\{h<1\}$ the error function $\widehat f_{\mathrm {mf}}-f$ is orthogonal to $g(x)=x-h$ in $L^2([h_+,1])$. Here $\operatorname{supp} p$ denotes the (closed) support of the distribution $p$, and $\widehat f_{\mathrm {mf}}$ is given by \eqref{eq:ffm}.

Using this criterion, we can make several observations:
\begin{enumerate}
\item Any distribition supported on the set $\{(c,h):h\ge 1\}$ is stationary. 
\item Let $p$ be a stationary distribution with the corresponding prediction $\widehat f_{\mathrm {mf}}$. Assume that $f$ is continuous and $p$ is sufficiently regular so that $\widehat f_{\mathrm {mf}}$ is also continuous. Let $\widetilde p_{-0}(h)=\int_{\mathbb R\setminus \{0\}}p(c,h)dc$.  Suppose that $h\in \operatorname{supp}\widetilde p_{-0}\cap(0,1)$, and that $h$ is not an isolated point of $\operatorname{supp}\widetilde p_{-0}$. Then, $\widehat f_{\mathrm {mf}}(h)=f(h).$ Indeed, since $h$ is not isolated, we can choose a sequence $h_k\to h, h_k\in \operatorname{supp}\widetilde p_{-0}.$ Then $\widehat f_{\mathrm {mf}}-f$ is orthogonal to all linear functions, and in particular to the constant function, in $L^2([h,1])$ and in $L^2([h_k,1])$, and hence in $L^2([h,h_k])$. Since $h_k\to h$, we must have $(\widehat f_{\mathrm {mf}}-f)(h)=0$ by the continuity of $\widehat f_{\mathrm {mf}}-f$.
\item Let $p$ be stationary and the marginal $\widetilde p_{-0}(h)$ be defined as above. Suppose that $h_1,h_2\in[0,1]$ are two points of $\operatorname{supp}\widetilde p_{-0}$, and $(h_1,h_2)\cap \operatorname{supp}\widetilde p_{-0}=\emptyset$. Then, on the interval $[h_1,h_2]$, the function $\widehat f_{\mathrm {mf}}$  is a linear regression of the function $f$ in the usual $L^2([h_1,h_2])$ sense.
\end{enumerate}
Let us now consider a particular function, say $f(x)=x^2$, and use these observation to explicitely classify all stationary distributions $p$ in this case. First, note that a stationary distribution may have an arbitrary component supported on the set $\{(c,h):h\ge 1\}$ and making no effect on the prediction; so it suffices to consider only distributions $p$ such that $p(\{(c,h):h\ge 1\})=0.$ 

It is convenient to consider separately the cases when  the distribution $p$ has or does not have a component in the halfplane $\{(c,h):h< 0\}$. 

\emph{Case 1: $p$ does not have a component in the halfplane $\{(c,h):h< 0\}$.}
Consider again the marginal $\widetilde p_{-0}(w)=\int_{\mathbb R\setminus \{0\}}p(c,w)dc$. The complement to the support of $\widetilde p_{-0}$  can be written as a countable union of nonoverlapping open intervals: 
\begin{equation}\label{eq:aak}\mathbb R\setminus \operatorname{supp} \widetilde p_{-0}=\cup_k(a_k,b_k).\end{equation} 
Suppose that for some $k$ we have $a_k,b_k\in[0,1].$ Then, by observation 3 above, $\widehat f_{\mathrm {mf}}$ is a linear regression of $f$ on the interval $(a_k,b_k)$. Since $f(x)=x^2$, this implies, in particular, that
 \begin{equation}\label{eq:aab}f(a_k)-\widehat f_{\mathrm {mf}}(a_k)=f(b_k)-\widehat f_{\mathrm {mf}}(b_k)=\frac{(b_k-a_k)^2}{6}.\end{equation}
   This shows, by observation 2 above, that $a_k,b_k$ are isolated points of $\operatorname{supp} \widetilde p_{-0}.$ Similarly, if $a_k\in(0,1)$ and $b_k=+\infty$, then $a_k$ is an isolated point of $\operatorname{supp} \widetilde p_{-0}$ and $\widehat f_{\mathrm {mf}}$ is a linear regression of $f$ on the interval $(a_k,1)$, and
\begin{equation}\label{eq:aab2}f(a_k)-\widehat f_{\mathrm {mf}}(a_k)=f(1)-\widehat f_{\mathrm {mf}}(1)=\frac{(1-a_k)^2}{6}.\end{equation} 
  If the left end $a_k$ of an interval is an isolated point of $\operatorname{supp} \widetilde p_{-0}$, then it must be equal to the right end $b_m$ of another interval $(a_m,b_m)$ from expansion \eqref{eq:aak}. We conclude that if there is at least one interval $(a_k,b_k)$ with $a_k\in(0,1),$ then all other intervals can be reconstructed, one-by-one, by considering neighboring intervals. Thanks to identities \eqref{eq:aab},\eqref{eq:aab2}, $\operatorname{supp} \widetilde p_{-0}$ must then be a finite set consisting of equally spaced points $h_1=1-\Delta h, h_2=1-2\Delta h,\ldots, h_M=1-M\Delta h$, and $f(h_k)-\widehat f_{\mathrm {mf}}(h_k)=\frac{(\Delta h)^2}{6}.$ Since $h_M$ is the leftmost point of $\operatorname{supp} \widetilde p_{-0}$, we have $\widehat f_{\mathrm {mf}}(h_M)=0$. Then, the value of $\Delta h$ can be computed for a given $M$ from the quadratic equation $\frac{(\Delta h)^2}{6}=f(h_M)=(1-M\Delta h)^2$,  specifically, $\Delta h=\frac{6M-\sqrt{6}}{6M^2-1}.$ In particular, for $M=1$ we obtain $h_1=\frac{\sqrt{6}-1}{5}$.      

The only possibility for $\widetilde p_{-0}$ to have non-isolated points is if $\operatorname{supp} \widetilde p_{-0}=[a,1]$ with some $a\in[0,1)$ (so that there is no interval $(a_k,b_k)$ with $a_k\in(0,1)$).  Then, by observation 2, we have $\widehat f_{\mathrm {mf}}=f$ on $[a,1],$ and this can only be satisfied if $a=0$. So, in this case $p$ must be a globally optimal distribution.

\emph{Case 2: $p$ has a component in the halfplane $\{(c,h):h< 0\}$.} This case can be analized similarly, with the difference that if $a$ is the leftmost point of $\operatorname{supp} \widetilde p_{-0}$ in the interval $(0,1),$ then  $\widehat f_{\mathrm {mf}}$ is a linear regression of $f$ on the interval $[0,a].$ Like before, the are two possibilities: either the distribution $p$ is globally optimal and $\operatorname{supp} \widetilde p_{-0}$ covers the whole segment $[0,1]$, or the marginal distribution $\widetilde p_{-0}$ is discrete in $[0,1]$. In the second case, $\widetilde p_{-0}$ has finitely many atoms equally spaced between 0 and 1. 

We make a couple of further remarks. 
\begin{enumerate}
\item The conclusion that $\operatorname{supp} \widetilde p_{-0}\cap [0,1]$ is either a finite set or a segment $[a,1]$ extends to any uniformly convex function $f$. 
\item For $f(x)=x^2$ and a stationary $p$, the supports of the full marginal $\widetilde p(w)=\int_{\mathbb R}p(c,w)dc$ and of $\widetilde p_{-0}(w)=\int_{\mathbb R\setminus \{0\}}p(c,w)dc$ coincide on the interval $(0,1)$. 
\end{enumerate}

\subsection{Diagonalization of the operator $\widetilde {\mathcal K}$ with kernel \eqref{eq:wtkh} (section \ref{sec:smallc}) }\label{sec:adiag}
Consider the eigenvector equation $\zeta s(h)=\widetilde {\mathcal K}s(h)$ for the operator $\widetilde {\mathcal K}$ with kernel \eqref{eq:wtkh}, 
i.e.
\begin{equation}\label{eq:aazs}\zeta s(h)=\int_{0}^1\Big(\int_0^1 (x-h)_+(x-h')_+dx\Big)s(h')dh'.\end{equation}
Differentiating twice this equation in $h$ and using the identity $\frac{d^2}{dh^2}(x-h)_+=\delta(x-h),$ we obtain
\begin{equation}\label{eq:azfd}\zeta\frac{d^2}{dh^2}s(h)=\int_{0}^1(h-h')_+s(h')dh'.\end{equation}
Differentiating again and using the same identity, we get
\begin{equation}\label{eq:azfd4}\zeta\frac{d^4}{dh^4}s(h)=s(h).\end{equation}
Moreover, Eq.\eqref{eq:aazs} implies the boundary conditions $s(1)=\frac{d}{dh}s(1)=0$ and Eq.\eqref{eq:azfd} implies the boundary conditions $\frac{d^2}{dh^2}s(0)=\frac{d^3}{dh^3}s(0)=0.$ It follows from Eq.\eqref{eq:azfd4} and the boundary conditions at $h=0$ that an eigenvector must have the form $s(h)=ag_1(h)+bg_2(h)$ with some coefficients $a,b$ and
\[g_1(h)=\cosh(\xi h)+\cos(\xi h),\quad g_2(h)=\sinh(\xi h)+\sin(\xi h),\quad 0<\xi=\zeta^{-1/4}.
\]
The boundary conditions at $h=1$ can be satisfied if \[\det\begin{pmatrix}g_1(1)&g_2(1)\\\frac{dg_1}{dh}(1)&\frac{dg_1}{dh}(1)\end{pmatrix}=0,\]
which gives the condition on $\xi:$
\[\cosh(\xi)\cos(\xi)=-1.\]
The solutions to this equations can be written as
\[\xi_k=\frac{\pi}{2}+\pi k+2(-1)^ke^{-\pi/2-\pi k}(1+o(1)),\quad k=0,1,\ldots\]
In particular, the first several values $\xi_k$ and the respective eigenvalues $\zeta_k=\xi_k^{-4}$ are
\begin{align*}
\xi_0&\approx 1.87510, \quad \zeta_0\approx 0.08089\\
\xi_1&\approx 4.69409, \quad \zeta_1\approx 0.002059\\
\xi_2&\approx 7.85475, \quad \zeta_2\approx 0.0002627. 
\end{align*}
An eigenfunction for the eigenvalue $\zeta_k$ can then be written as 
\begin{equation}\label{eq:askh}s_k(h)=\cosh(\xi_k h)+\cos(\xi_k h)-\frac{\cosh(\xi_k)+\cos(\xi_k)}{\sinh(\xi_k)+\sin(\xi_k)}(\sinh(\xi_k h)+\sin(\xi_k h)).\end{equation}
Several first eigenfunctions are shown in Fig.\ref{fig:eigenf}.  
\begin{figure}
\begin{center}
\includegraphics[scale=0.5]{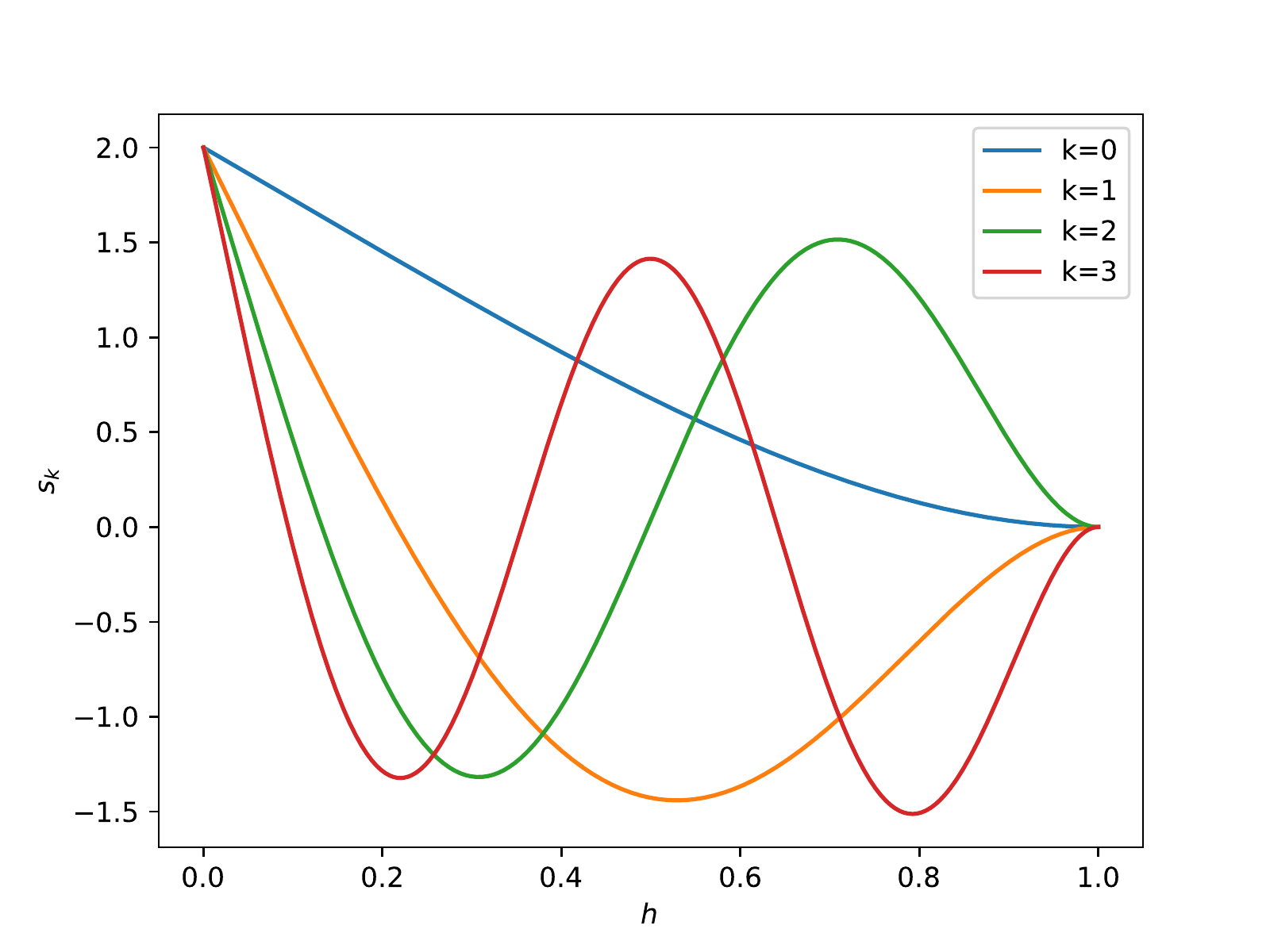}
\caption{Eigenfunctions \eqref{eq:askh} of the operator $\widetilde {\mathcal K}$.}\label{fig:eigenf}
\end{center}
\end{figure}
In the limit $k\to\infty$ we have
\[\frac{\cosh(\xi_k)+\cos(\xi_k)}{\sinh(\xi_k)+\sin(\xi_k)}=\frac{e^{\xi_k}/2}{e^{\xi_k}/2+(-1)^k}+o(e^{-\xi_k})=1-2(-1)^ke^{-\xi_k}+o(e^{-\xi_k}),\]
so
\begin{align*}s_k(h)={}&\cos(\xi_k h)-\sin(\xi_k h)+e^{-\xi_kh}+(-1)^ke^{-\xi_k(1-h)}+o(1)\\
\approx{}&\sqrt{2}\cos\Big(\xi_k h+\frac{\pi}{4}\Big). 
\end{align*}
%
\begin{figure}
\begin{center}
\includegraphics[scale=0.65,trim=3mm 3mm 0mm 0mm,clip]{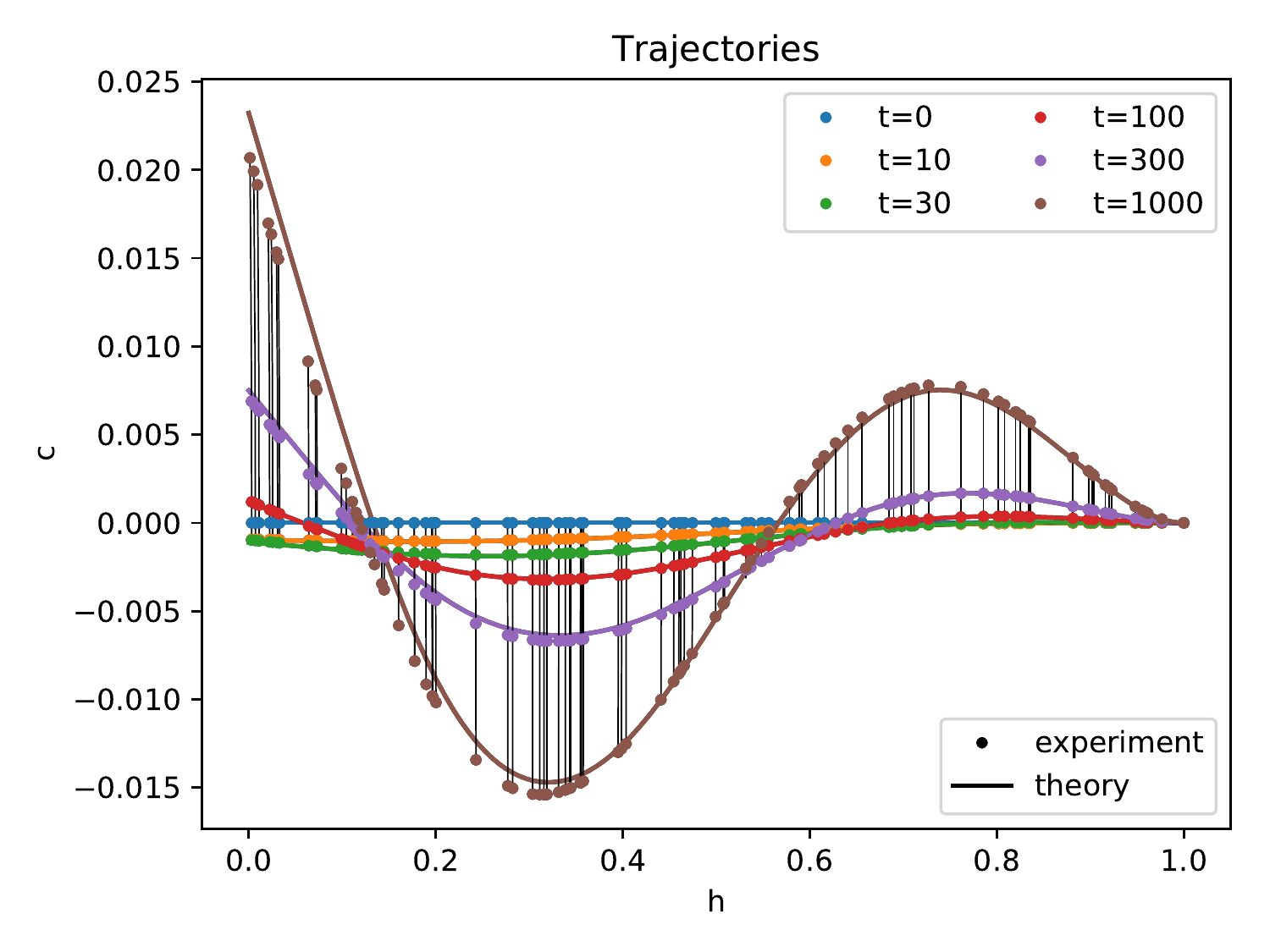}
\includegraphics[scale=0.65,trim=0mm 3mm 3mm 0mm,clip]{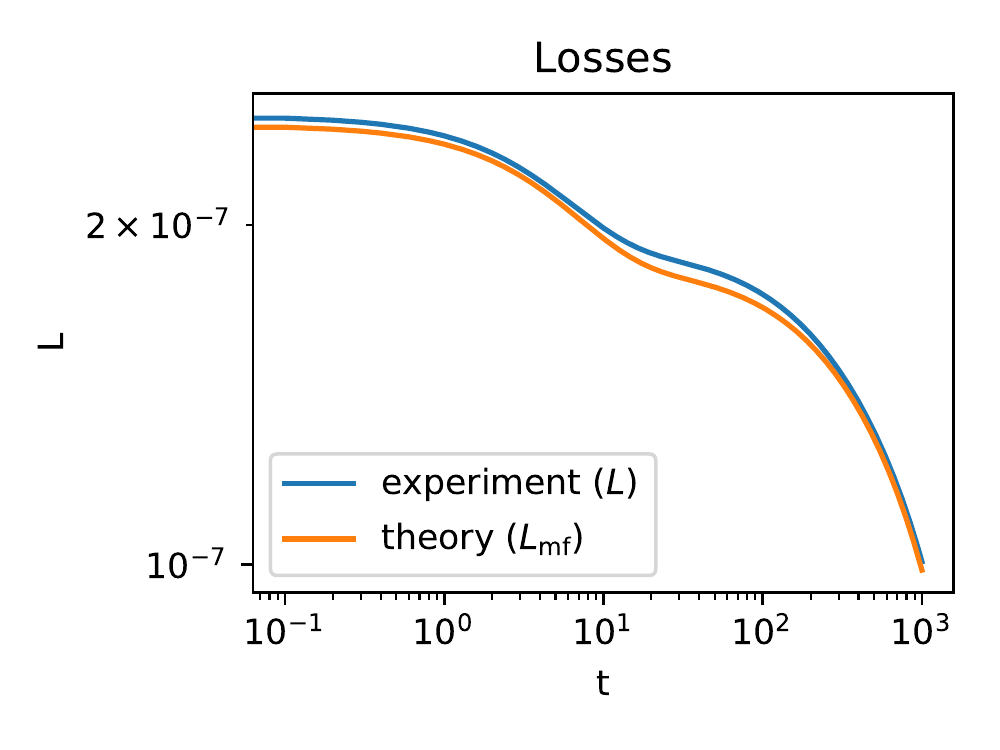}
\caption{Comparison of directly simulated \eqref{eq:dtw} and theoretically described \eqref{eq:st},\eqref{eq:lmfzeta} evolutions of the weights (left) and of the loss (right) in the small-$c$ scenario for the ground truth function $f(x)=10^{-3}\sin 2\pi x$. The scalar products $\langle \widehat p\mathbf s_k,\widetilde \Phi^* \mathbf z\rangle,\langle \widehat p\mathbf s_k,\mathbf s_k\rangle$ are found numerically; the expansions  \eqref{eq:st},\eqref{eq:lmfzeta} are truncated at $k=10$.}\label{fig:smallC_lossCompare}
\end{center}
\end{figure}
In Fig.\ref{fig:smallC_lossCompare} we use the obtained spectral decomposition to compute a few first terms in expansions \eqref{eq:st},\eqref{eq:lmfzeta} for the ground truth function $f(x)=10^{-3}\sin 2\pi x$. We compare the results with direct simulation and observe a good agreement between them.

\subsection{Diagonalization of the operator $\mathcal R$ (section \ref{sec:splineslinglob})}\label{sec:ar}
We diagonalize the operator $\mathcal R$ appearing in Sec.\ref{sec:splineslinglob} and describing dynamics linearized about the global minimizer $p_\infty=\mathbf 1_{[0,1]}$ of the mean-field loss function. 

By Eq.\eqref{eq:rk}, $\mathcal R$ can be viewed as the integral operator with the kernel $R(h,h')=\frac{d}{dh}(p_\infty(h)\frac{d}{dh}) K( h, h'),$ where $K(h,h')$ is the kernel that has already appeared previously, \[K(h,h')=\int_0^1 (x-h)_+(x-h')_+dx.\]
It follows that 
\begin{align*}
R(h,h')
={}&(h-h')_+\mathbf 1_{[0,1]}(h)+\delta(h)\frac{d}{dh} K(0,h')-\delta(h-1)\frac{d}{dh} K(1,h')\\
={}& (h-h')_+\mathbf 1_{[0,1]}(h)-\delta(h)\int_{0}^1 (x-h')_+ dx,
\end{align*}
with Dirac delta. Note that $\int_{\mathbb R}R(h,h')dh = 0$ for all $h'$. The operator $\mathcal R$ creates a $\delta$-measure at $h=0$ that compensates the ``regular'' output component, so that \begin{equation}\label{eq:aintr}\int_{\mathbb R}\mathcal R p(h)dh=0\end{equation}
for any $p$. (Note that the $\delta$-measure has a finite norm $\langle\mathcal K\cdot,\cdot\rangle$.)

Consider now the eigenvalue equation
\[\lambda p = \mathcal Rp \quad (\lambda\le 0).\]
The operator $\mathcal R$ has an infinite-dimensional nullspace (for example, any distribution $p$ supported on $[1,+\infty)$ is in the nullspace). We will be interested in distributions converging to $p_\infty$, so (as discussed in section \ref{sec:linglobmin}) in what follows we will only be interested in strictly negative eigenvalues. Since $\mathcal R$ creates Dirac delta at $h=0$, we look for an eigenfunction in the form \[p=p_{\mathrm{reg}}+c\delta,\] where $p_{\mathrm{reg}}$ is a ``regular'' component of $p$. Then, be Eq.\eqref{eq:aintr}, \[c=-\int_{0}^1p_{\mathrm{reg}}(h)dh.\] When restricted to the regular component, the eigenvalue equation reads:
\begin{align*}
\lambda p_{\mathrm{reg}}(h)=&\int_0^1(h-h')_+p(h')dh' \\
= & \int_0^1(h-h')_+p_{\mathrm{reg}}(h')dh'+ch\\
= & \int_0^h(h-h')p_{\mathrm{reg}}(h')dh'+ch.
\end{align*}
It follows that 
\[\lambda p''_{\mathrm{reg}}=p_{\mathrm{reg}},\quad p_{\mathrm{reg}}(0)=0,\quad \lambda p_{\mathrm{reg}}'(0)=c.\]
Hence $p_{\mathrm{reg}}(h)=\sin(\mu h)$ and $\lambda=-\mu^{-2}$ for some $\mu$. Moreover, $-\mu^{-2}\mu=\lambda p_{\mathrm{reg}}'(0)=c$, i.e. $c=-\frac{1}{\mu}$. Thus $-\frac{1}{\mu}=-\int_0^1\sin (\mu h) dh=\frac{\cos \mu-1}{\mu},$ i.e. $\cos\mu=0$ and hence $\mu=\frac{\pi}{2}+k\pi, k=0,1,\ldots$ 

Summarizing, $\mathcal R$ has eigenvectors $p_k$ and eigenvalues $\lambda_k,$ where
\[ p_k(h) = \sin (\mu_kh)-\frac{1}{\mu_k}\delta(h),\quad \mu_k=\frac{\pi}{2}+k\pi, \quad \lambda_k=-\mu_k^{-2}.\]
Let us compute $\mathcal Kp_k:$
\begin{align*}
\mathcal Kp_k(h) ={}& \int_{0}^1(x-h)_+\Big(\int_0^1(x-h')_+\big(\sin (\mu_kh')-\frac{1}{\mu_k}\delta(h')\big)dh'\Big)dx \\
={}&\frac{1}{\mu_k}\int_{0}^1(x-h)_+\Big(x-(x-h')\cos(\mu_kh')|_{h'=0}^x-\int_0^x\cos(\mu_kh')dh'\Big)dx\\
={}&-\frac{1}{\mu_k^2}\int_{0}^1(x-h)_+\sin(\mu_kx)dx\\
={}&-\frac{1}{\mu_k^3}\Big(-(x-h)\cos(\mu_kx)|_{x=h}^1+\int_{h}^1\cos(\mu_kx)dx\Big)\\
={}&\frac{1}{\mu_k^4}(\sin(\mu_kh)-(-1)^k).
\end{align*}
In particular,
\begin{equation}\label{eq:akpkpn}\langle \mathcal K p_k, p_n\rangle=\frac{1}{\mu_k^4}\int_0^1(\sin(\mu_kh)-(-1)^k)(\sin (\mu_nh)-\frac{1}{\mu_k}\delta(h))dh=\frac{1}{2\mu_k^4}\delta_{kn}.\end{equation}
Also, for any $p$ such that $\int_0^1p(h)dh=0,$ 
\begin{equation}\label{eq:akpkp}\langle \mathcal K p_k, p\rangle=\frac{1}{\mu_k^4}\int_0^1\sin(\mu_kh)p(h)dh.\end{equation}
Any $p$ supported on $[0,1]$ and such that $\int_0^1p(h)dh=0$ can be expanded over the eigenvectors $p_k$. Let $p(h,t)=p_\infty(h)+\delta p(h,t)$ be a solution of the transport equation such that $p(\cdot,0)$ is supported on $[0,1]$ and $\int_0^1p(h,0)dh=\int_0^1p_\infty(h)dh=1.$ Then $\delta p(\cdot,0)$ can be expanded over the eigenvectors $p_k$ and hence, by Eqs.\eqref{eq:akpkpn},\eqref{eq:akpkp}  the loss expansion \eqref{eq:lmflin} can be written as
\begin{align*}L_{\mathrm{mf}}(p(\cdot,t))={}& \sum_{k=0}^\infty\Big(\int_0^1\sin(\mu_kh)\delta p(h,0)dh\Big)^2\frac{e^{-2t/\mu_k^{2}}}{\mu_k^4}.\end{align*}  

\end{document}